
\documentclass[letterpaper, 10 pt, conference]{ieeeconf}  

\IEEEoverridecommandlockouts                              
\overrideIEEEmargins

\usepackage[utf8]{inputenc}
\usepackage[T1]{fontenc}

\usepackage{cite}
\usepackage[pdftex]{graphicx}
\usepackage{amsmath}
\usepackage{amssymb}
\usepackage{array}
\usepackage{wasysym}
\usepackage{bm}
\usepackage{siunitx}
\usepackage{url}
\usepackage{color}
\usepackage{epstopdf}
\usepackage[ruled]{algorithm2e} 
\usepackage{arydshln}
\usepackage{mathtools}

\usepackage{stfloats}
\usepackage{relsize}
\usepackage{cancel}
\usepackage{epstopdf}
\usepackage{environ}
\usepackage{amsthm}

\setlength\dashlinedash{0.2pt}
\setlength\dashlinegap{1.0pt}
\setlength\arrayrulewidth{0.3pt}

\NewEnviron{ruledtable}{%
\par\addvspace{2mm}\hrule height 0.03cm 
\begin{table}[!h]\BODY\end{table}
\hrule height 0.03cm \addvspace{2mm}
}

\DeclareMathOperator*{\argmin}{arg\,min}

\theoremstyle{plain}

\newtheorem{definition}{Definition}
\newtheorem{proposition}{Proposition}
\newtheorem{remark}{Remark}
\newtheorem{corollary}{Corollary}
\newtheorem{assumption}{Assumption}

\usepackage{CJKutf8}


\begin{document}
\title{\LARGE \bf{Hierarchical Relaxation of Safety-critical Controllers: Mitigating \\  Contradictory Safety Conditions with Application to Quadruped Robots}}
\author{Jaemin Lee, Jeeseop Kim, and Aaron D. Ames
\thanks{This work is supported by Dow under the project  \#227027AT.}
\thanks{The authors are with the Department of Mechanical and Civil Engineering, California Institute of Technology, Pasadena, CA, USA (e-mail: \{jaemin87, jeeseop, ames\}@caltech.edu)} 
}

\maketitle

\begin{abstract}
The safety-critical control of robotic systems often must account for multiple, potentially conflicting, safety constraints.  
This paper proposes novel relaxation techniques to address safety-critical control problems in the presence of conflicting safety conditions. 
In particular, Control Barrier Function (CBFs) provide a means to encode safety as constraints in a Quadratic Program (QP), wherein multiple safety conditions yield multiple constraints. 
However, the QP problem becomes infeasible when the safety conditions cannot be simultaneously satisfied. To resolve this potential infeasibility, we introduce a hierarchy between the safety conditions and employ an additional variable to relax the less important safety conditions (Relaxed-CBF-QP), and formulate a cascaded structure to achieve smaller violations of lower-priority safety conditions (Hierarchical-CBF-QP). The proposed approach, therefore, ensures the existence of at least one solution to the QP problem with the CBFs while dynamically balancing enforcement of additional safety constraints. Importantly, this paper evaluates the impact of different weighting factors in the Hierarchical-CBF-QP and, due to the sensitivity of these weightings in the observed behavior, proposes a method to determine the weighting factors via a sampling-based technique.  The validity of the proposed approach is demonstrated through simulations and experiments on a quadrupedal robot navigating to a goal through regions with different levels of danger. 
\end{abstract}


\section{Introduction}
\label{sec:introduction}
\vspace{-0.1cm}
Robotic systems are being increasingly deployed to perform a wide-variety tasks in unstructured environments, including applications that necessitate operating in close proximity with people. 
Therefore, ensuring the safety of these systems while they operate in such environments is crucial.  As such, there have been a variety of approaches to safety applied to a range of robot types, including: mobile robots\cite{igarashi2018collision,desai2022clf,manjunath2021safe}, legged robots\cite{teng2021toward,liu2023realtime,lee2021efficient}, multi-agent systems\cite{wang2017safety,chen2020guaranteed}, and even humanoid robots \cite{kim2020dynamic,khazoom2022humanoid,koptev2021real}. As of late, Control Barrier Functions (CBFs) \cite{7782377} have emerged as a popular method for practically enforcing safety constraints on robotic systems. However, as the environments and tasks become more complicated, safety constraints become contradictory or conflicting. Therefore, it is essential to manage such contradictory safety statements to ensure safe and performant controllers in obstacle-rich environments as illistrated in Fig. \ref{Fig1}. With this motivation, this paper addresses the potential infeasibility of Quadratic Programming (QP) that include contradictory CBF constraints and proposes a hierarchical approach for synthesizing feedback controllers and safety filters handling multiple safety constraints.

\begin{figure}[t] 
\centering
\includegraphics[width=\linewidth]{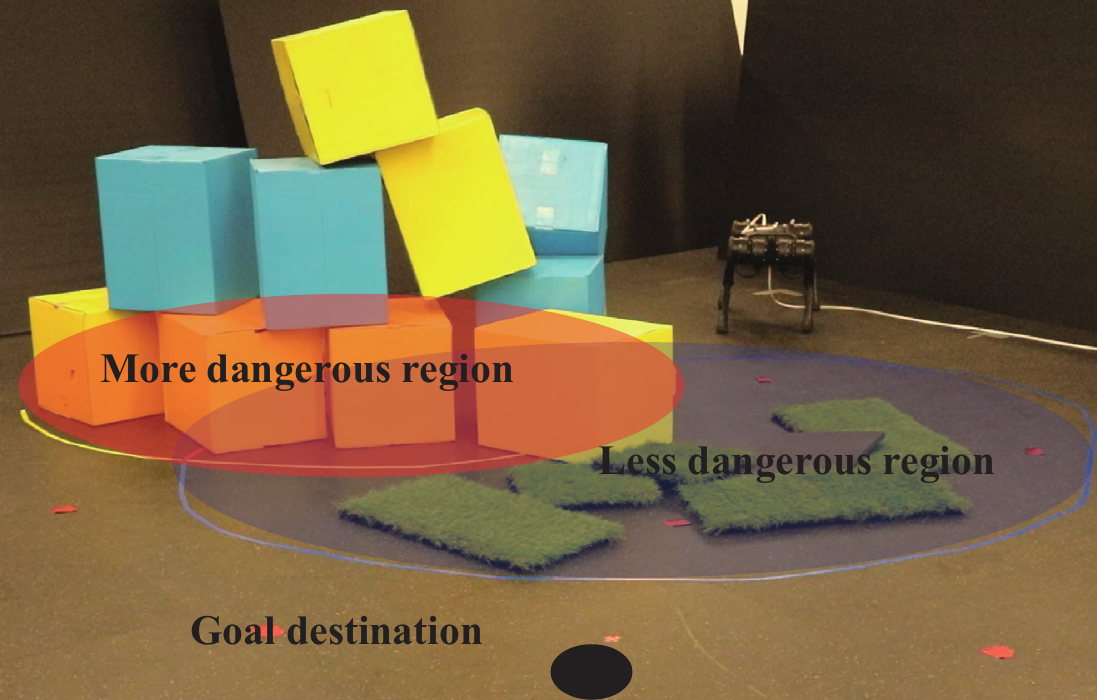}
\vspace{-4mm}
\caption{\textbf{Motivation of our study}: When robots operate in complex environments with numerous obstacles, they are often required to follow trajectories that pass through unsafe regions. Such scenarios give rise to challenges in control and planning to ensure the safety of the robots.}
\label{Fig1}
\vspace{-0.5cm}
\end{figure}

\subsection{Related Work}
For robotic applications, CBF-based approaches have proven to be an effective method for generating safe robot motions in the context of planning, control, and learning. To provide an example, for legged robots safety-critical CBF constraints can be enforced to obtain safe foot placements \cite{agrawal2017discrete}, coupled with learning \cite{csomay2021episodic}, and combed with MPC in a multi-rate fashion \cite{grandia2021multi}.  Manipulators can be controlled with CBFs to enforce box constraints in operational space \cite{rauscher2016constrained}, or avoid collisions with the environment \cite{ferraguti2022safety,singletary2022safety}. In the whole-body control of humanoid robots, joint limits and self-collision avoidance constraints can be imposed using CBFs \cite{khazoom2022humanoid}.  However, these studies assume the existence of a safe set satisfying the safety-critical constraints over the entire time horizon. In reality, many robots operate with contradictory safety statements. This fundamental issue dates back to the control theoretic origins of CBFs. 



At their inception \cite{ames2014control,7782377}, CBFs were introduced as a control theoretic tool to ensure safety framed as forward set invariance.  A CBF results in an affine inequality constraint in the input that, when enforced, implies safety; since this constraint in affine, it can be enforces in a QP with a cost that minimizes the difference between the desired input and the safe input---the QP nature means it can be solved in real-time.  CBFs can also naturally be combined with Control Lyapunov Functions (CLFs), since these again yield an inequality constraint affine in the input---CLFs and CBF can thus be utilized in a single QP to enforce stability and safety under the assumption of feasibility.  In the case when the CLF and CBF constraints conflict, the CLF constraint is typically relaxed to strictly fulfill the safety-critical constraints at the cost of goal attainment.  This interplay between conflicting CLFs and CBFs was studied in \cite{zeng2021safety} by relaxing CLF constraint and scaling the lower bound of CBF constraint.  This naturally leads to the setting of multiple barrier functions.


Multiple barrier functions have been widely studied in the context of multi-agent robotic systems, with Boolean compositions of safety-critical constraints formulated using sums and products of barrier functions \cite{wang2016multi}. For multi-agent systems, non-smooth barrier functions can also be used with the max and min operators for Boolean compositions \cite{glotfelter2017nonsmooth}. Linear Temporal Logic (LTL) tasks are achieved by satisfying multiple time-varying barrier functions \cite{lindemann2018control}, and the approach has been extended to multi-agent systems considering conflicts of Temporal Logic tasks \cite{lindemann2019control}. However, none of these approaches consider a hierarchy among the safety-critical constraints. Recently, a decoupling method for multiple CBF constraints has been proposed to compose multiple CBFs while enforcing input constraints \cite{breeden2022compositions}. However, this method does not impose a hierarchy of safety-critical constraints either.
This paper, therefore, considers potentially conflicting CBFs, and propose a relaxation achieved by imposing a hierarchy based on their relative importance. 

\subsection{Contributions}
This paper leverages a two-layered control architecture for robotic systems that includes a reduced-order model (ROM) to generate control inputs that result in safe paths, and a full-order model (FOM) that leverage the nonlinear dynamics of the system to track these paths. The proposed approach focuses on principled means of relaxing hierarchical safety-critical constraints, expressed as CBFs in a QP.  In particular, we begin with the assumption that multiple safety conditions cannot be simultaneously satisfied and that a hierarchy exists among them.
The goal is then to strictly fulfill the top-priority safety condition while minimizing lower-priority violations, via a hierarchical architecture similar to \cite{lee2022hierarchical}. Additionally, the approach analyzes the weighting factors between the feedback control input error and the relaxation variable and proposes a technique to determine these weighting in the context of achieving a given task. 


The main contributions of this paper are threefold. First, the proposed approach ensures that the formulated Hierarchical-CBF-QP is always feasible while strictly satisfying the top-prioritized safety-critical constraint. In addition, we reduce the violations of the lower-prioritized CBFs in order of their importance.  Second, the approach includes a technique to determine the weighting factors between the nonlinear feedback control input error and the relaxation variables. Instead of defining new candidate functions that lower bound the CBF constraints (extended class $\mathcal{K}$ functions), we still utilize the same functions and obtain optimal parameters of the weightings in Hierarchical-CBF-QP, minimizing the cost function.  Lastly, the proposed method enables to the generation of safer paths without the need for re-planning trajectories, particularly in crowded obstacle-rich environments.  We demonstrate this experimentally. 

Our paper is organized as follows. In Section \ref{section2}, provide the necessary background on the nonlinear modelling and control of robots, and frame the safety-critical control approach, including the introduction of CBFS and safety filters. Section \ref{section3} outlines the proposed approach for relaxing safety-critical controllers and thereby incorporating contradictory CBF conditions. This section also includes a technique for determining the appropriate weighting factors to optimize performance. The proposed approach is also extended to handle arbitrary CBF constraints in Section \ref{section3}. Finally, Section \ref{section4} provides both simulation and experiment results of the proposed approach applied to a quadrupedal robot platform (A1) walking in a cluttered environment. 

\begin{figure}[t] 
\centering
\includegraphics[width=\linewidth]{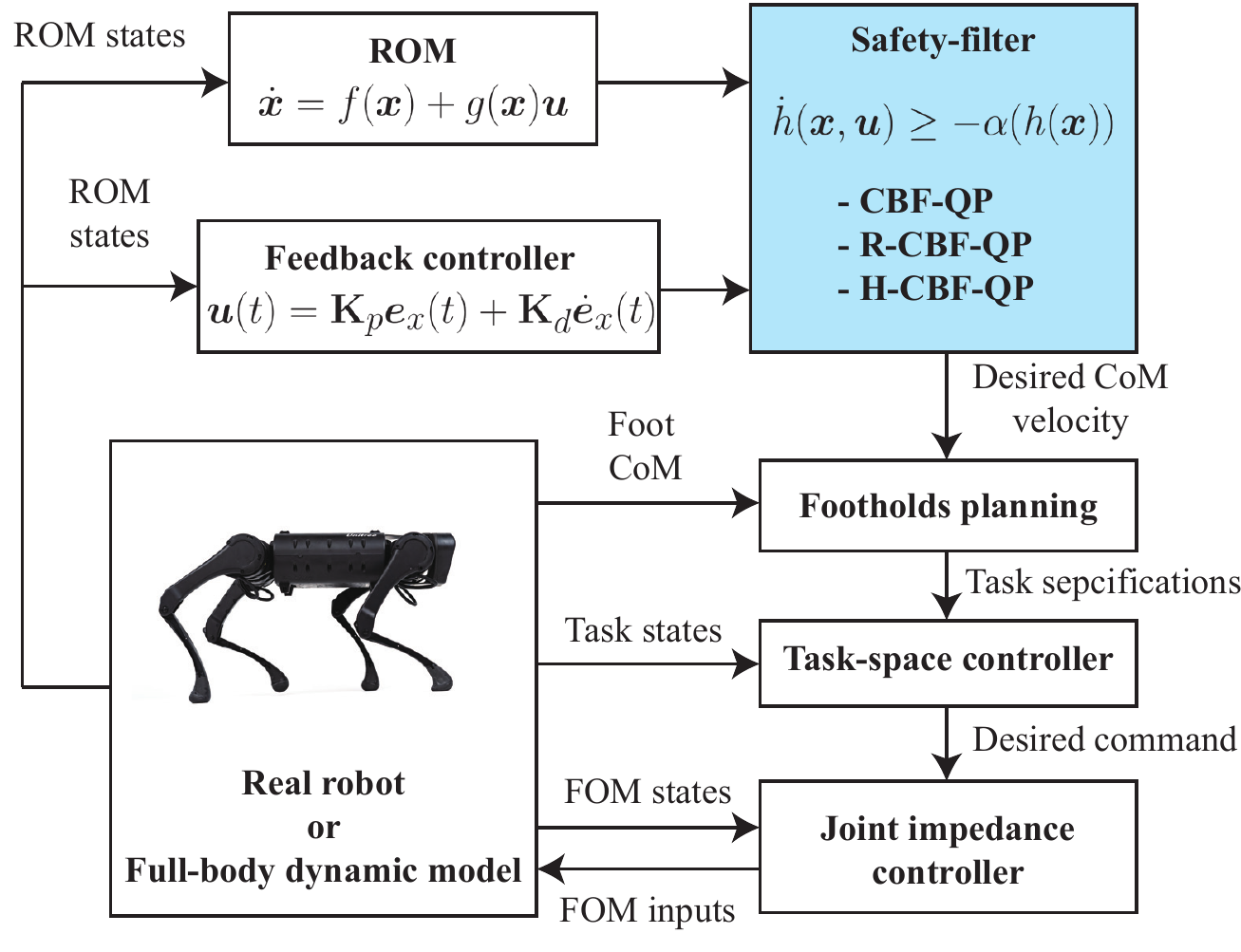}
\vspace{-6.5mm}
\caption{\textbf{Control Framework}: Our relaxation approaches are crucial components of the control framework described earlier. To ensure a fair comparison of these methods, we have used consistent components and setups in our implementation.}
\label{Fig2}
\vspace{-0.5cm}
\end{figure}

\section{Problem Statement}
\label{section2}
This section covers the fundamental concepts that underpin the proposed control architecture for robotic systems described in Fig. \ref{Fig2}: the control-affine system (ROM), the rigid-body dynamic model (FOM), the feedback controllers, and the concept of CBF. With these concepts in place, we define our problem, making proper assumptions to ensure the effectiveness of the proposed control architecture.

\subsection{Reduced-order Model and Feedback control}
We introduce control affine systems defined in a state space $\mathcal{X} \subseteq \mathbb{R}^{n}$ and an admissible input space $\mathcal{U} \subseteq \mathbb{R}^{m}$: 
\begin{equation}
\label{eqn:nonlinearcontrolsys}
    \dot{\bm{x}}  = f(\bm{x}) + g(\bm{x})\bm{u}
\end{equation}
where $\bm{x}\in \mathcal{X}$ and $\bm{u} \in \mathcal{U}$ denote the state and control input. In addition, $f: \mathcal{X} \mapsto \mathbb{R}^{n}$ and $g: \mathcal{X} \mapsto \mathbb{R}^{m}$ are Lipschitz continuous. In this paper, the above control affine system is considered as ROM such as the linear inverted pendulum or double integrator models. To design a continuous feedback controller, we use  a simple PD control law with proportional and derivative gains, $\mathbf{K}_p$ and $\mathbf{K}_d$, given a state trajectory, $\bm{x}^{d}(t) ,\; \forall t \in [t_0, \;t_f]$ :   
\begin{equation}
    \bm{u}(t) = \mathbf{K}_p \bm{e}_x(t) + \mathbf{K}_{d} \dot{\bm{e}}_x(t)
\end{equation}
where $\bm{e}(t) = \bm{x}^{d}(t) - \bm{x}(t)$. This control input will be the desired input for planning and control based on the full-order model. The state of ROM is computed and updated based on measurements or the state update of FOM.  

\subsection{Full-order Model, Planning, and Task-Space Control}
To realize the control input, we employ full-body rigid body dynamics and intermediate planners that connect ROM and FOM. For instance, it is needed to plan the foot placement while implementing the control input $\bm{u}(t)$ with Raibert's heuristic \cite{raibert1984experiments}. Since the planning of footholds is outside the scope of this paper, we assume that the task trajectory $\bm{y}^{d}(t) \in \mathbb{R}^{n_y}$ is properly planned in terms of $\bm{u}(t)$.

The equation of motion for floating-base robots with contacts is typically presented as follows:
\begin{align}
	\mathbf{D}(\mathbf{q})\ddot{\mathbf{q}} + \mathbf{H}(\mathbf{q},\dot{\mathbf{q}}) &= \mathbf{S}^{\top} \bm{\tau} + \mathbf{J}_{c}(\mathbf{q})^{\top} \mathbf{F}_{c},  \label{eq:dyn_eq}\\
    \mathbf{J}_c(\mathbf{q})\ddot{\mathbf{q}}  + \dot{\mathbf{J}}_c(\mathbf{q},\dot{\mathbf{q}}) \dot{\mathbf{q}} &= \mathbf{0} \label{eq:holonomic_const}
\end{align}
where $\mathbf{q} \in \mathcal{Q} \subset \mathbb{R}^{n_q}$, $\mathbf{D}(\mathbf{q}) \in \mathbb{S}_{>0}^{n_q}$, $\mathbf{H}(\mathbf{q}, \dot{\mathbf{q}})\in \mathbb{R}^{n_q}$, $\mathbf{S} \in \mathbb{R}^{(n_q-6)\times n}$, and $\bm{\tau} \in \Gamma \subset \mathbb{R}^{n_q-6}$ denote the joint variable, mass/inertia matrix, the sum of Coriolis/centrifugal and gravitational force, a selection matrix, and a control torque command, respectively. In addition, $\mathbf{F}_{c} \in \mathbb{R}^{n_c}$ and $\mathbf{J}_{c}(\mathbf{q})\in \mathbb{R}^{n_c \times n_q}$ represent the contact wrench and the corresponding Jacobian, respectively. For rigid contacts, we consider an equality constraint \eqref{eq:holonomic_const} in the acceleration level. With $\bm{y}^{d}$, $\dot{\bm{y}}^{d}$, $\mathbf{q}$, and $\dot{\mathbf{q}}$, we formulate a QP problem to obtain the desired control command to minimize the tracking error: 
\begin{ruledtable}
\vspace{-2mm}
{\textbf{\normalsize Task-Space Controller:}}
\par\vspace{-4mm}{\small 
\begin{subequations}
\begin{align*}
    \min_{(\ddot{\mathbf{q}}, \bm{\tau}, \mathbf{F}_{c})} &\quad \mathcal{J}_{y} = \| \mathbf{K}_p(\bm{y}^{d} - \bm{y}) +  \mathbf{K}_d (\dot{\bm{y}}^{d} - \dot{\bm{y}}) - \mathbf{J}_y(\mathbf{q}) \ddot{\mathbf{q}} \|^2, \\
    \textrm{s.t.}&\quad \mathbf{D}(\mathbf{q})\ddot{\mathbf{q}} + \mathbf{H}(\mathbf{q}, \dot{\mathbf{q}}) = \mathbf{S}^{\top} \bm{\tau} + \mathbf{J}_{c}(\mathbf{q})^{\top} \mathbf{F}_{c}, \\
    &\quad \mathbf{J}_c(\mathbf{q})\ddot{\mathbf{q}}  + \dot{\mathbf{J}}_c(\mathbf{q},\dot{\mathbf{q}}) \dot{\mathbf{q}} = 0, \\
    &\quad \mathbf{F}_{c} \in \mathcal{CWC} (\mathbf{q}), \quad \bm{\tau} \in \Gamma
\end{align*}
\end{subequations}}
\vspace{-6mm}
\end{ruledtable}
\noindent
$\mathcal{CWC}(\mathbf{q})$ is the set of the contact forces satisfying the contact wrench cone constraint. The control of the legged robots is achieved through a joint-impedance controller, which integrates the joint feedback control with a feedforward torque obtained from the solution to the aforementioned QP. 

\subsection{Control Barrier Functions (CBFs)}
This section provides a brief overview of the fundamental concepts and definitions related to CBFs. We introduce a safe set $\mathcal{C} \subset \mathbb{R}^{n}$ defined as the $0$-superlevel set of a continuously differentiable function $h: \mathbb{R}^{n} \mapsto \mathbb{R}$, its boundary and interior sets:
\begin{equation}
\label{eqn:Cset}
    \begin{split}
        \mathcal{C} &\coloneqq \{\bm{x}\in \mathbb{R}^{n}: h(\bm{x}) \geq 0 \},\\
        \partial \mathcal{C} &\coloneqq \{ \bm{x} \in \mathbb{R}^{n}: h(\bm{x}) = 0 \},\\
        \textrm{int}(\mathcal{C}) &\coloneqq \{ \bm{x} \in \mathbb{R}^{n}: h(\bm{x}) > 0 \}.
    \end{split}
\end{equation}
The nonlinear system is safe with respect to the safe set $\mathcal{C}$ if $\mathcal{C}$ is forward invariant. To recall the basic concepts and definitions related to CBFs, a continuous function $\alpha: (a,b) \mapsto \mathbb{R}$ is an \emph{extended class $\mathcal{K}$ ($\mathcal{K}^e$) function} if it is strictly increasing with $\alpha(0) = 0$. In particular, if the open interval is $(-\infty, \infty)$ and $\alpha$ is strictly increasing with $\alpha(0) = 0$, $\lim_{r \rightarrow -\infty}\alpha(r) = - \infty$, and $\lim_{r\rightarrow +\infty} \alpha(r) = + \infty$, $\alpha$ is an extended class $\mathcal{K}_{\infty}$ ($\mathcal{K}_{\infty}^{e}$) function. Using the function $\alpha$, CBFs are defined according to \cite{7782377}.

\begin{definition}
Let $\mathcal{C} \subset \mathbb{R}^{n}$ be the $0$-superlevel set of a continuously differentiable function $h:\mathbb{R}^{n} \mapsto \mathbb{R}$ as in \eqref{eqn:Cset}.  
$h$ is a \textbf{Control Barrier Function (CBF)} if there exists a function $\alpha \in \mathcal{K}_{\infty}^{e}$ such that for all\footnote{Here we consider all points in $\mathbb{R}^{n}$ for notational simplicity.  Technically, one could consider a subset $E$ containing $\mathcal{C}$ and restrict our attention to this set $E$.  One would only need to check the CBF conditions on this set $E$.} $x\in \mathbb{R}^{n}$:
\begin{equation*}     \begin{split}
    \sup_{\bm{u} \in \mathbb{R}^{m}}  \dot{h}(\bm{x},\bm{u}) &=  \sup_{\bm{u}\in \mathbb{R}^{m}}  \left[ \mathcal{L}_{f}h(\bm{x}) + \mathcal{L}_{g}h(\bm{x}) \bm{u} \right] \geq - \alpha(h(\bm{x})).
    \end{split}
\end{equation*}
where $\mathcal{L}_{f} h(\bm{x}) = \frac{\partial h}{\partial \bm{x}}(\bm{x}) f(\bm{x})$ and $\mathcal{L}_{g} h(\bm{x}) = \frac{\partial h}{\partial \bm{x}}(\bm{x}) g(\bm{x})$ are Lie derivatives.
Given a CBF, we define a set of control input values for all $\bm{x} \in \mathbb{R}^{n}$ as follows:
\begin{equation*}
    \mathcal{K}_{\mathrm{cbf}}(\bm{x}) = \left\{\bm{u} \in \mathbb{R}^{m}:\mathcal{L}_{f}h(\bm{x}) + \mathcal{L}_{g}h(\bm{x}) \bm{u}  \geq - \alpha(h(\bm{x}))\right\}.
\end{equation*}
\end{definition}
Let $\bm{u}^{\star}$ be the desired control value.  We can find the (pointwise) closes safe control value through the use of a CBF framed in the context of the following QP: 
\begin{equation}\label{opt_second}
    \begin{split}
    \hat{\bm{u}} =& \argmin_{\bm{u} \in \mathcal{U}} \|\bm{u}^{\star} - \bm{u} \|^{2},\\
    & \textrm{s.t.} \quad \dot{h}(\bm{x}, \bm{u} ) \geq - \alpha(h(\bm{x})).
    \end{split}
\end{equation}
The key result of \cite{7782377} is that $\hat{\bm{u}}$ renders the set $\mathcal{C}$ forward invariant (that is, safe), i.e., for all $\bm{x}_0 \in \mathcal{C}$ it follows that $\bm{x}_0 \in \mathcal{C}$ for all time when $\hat{\bm{u}}$ is applied to \eqref{eqn:nonlinearcontrolsys}. 


\subsection{Problem Definition}
In this paper, let us consider two barrier functions $h_1: \mathbb{R}^{n} \mapsto \mathbb{R}$ and $h_2: \mathbb{R}^{n} \mapsto \mathbb{R}$, in turn the corresponding safe sets are defined as $\mathcal{C}_1$ and $\mathcal{C}_2$. Based on the defined CBFs, we formulate a standard CBF-QP given the feedback control input $\bm{u}^{\star}$.
\begin{ruledtable}
\vspace{-2mm}
{\textbf{\normalsize CBF-QP (Two CBFs):}}
\par\vspace{-4mm}{\small 
\begin{subequations}
\begin{align}
    \bm{u}^{\bullet} = \argmin_{\bm{u}\in \mathcal{U}} &\quad  \| \bm{u}^{\star} - \bm{u} \|^2, \nonumber \\
    \textrm{s.t.}&\quad \dot{h}_{1}(\bm{x}, \bm{u}) \geq - \alpha_{1}(h_1(\bm{x})), \label{cbf_1}\\
    &\quad \dot{h}_{2}(\bm{x}, \bm{u}) \geq - \alpha_{2}(h_2(\bm{x}))\label{cbf_2}
\end{align}
\end{subequations}}
\vspace{-6mm}
\end{ruledtable}
\noindent
When the feedback control command $\bm{u}^{\star}$ satisfies all constraints without conflicting each other, $\bm{u}^{\bullet}$ becomes original feedback control input $\bm{u}^{\star}$. However, the constraints frequently conflict with each other so we address specific assumptions for our problem, which is potentially infeasible CBF-QP. 

\begin{figure}[t] 
\centering
\includegraphics[width=\linewidth]{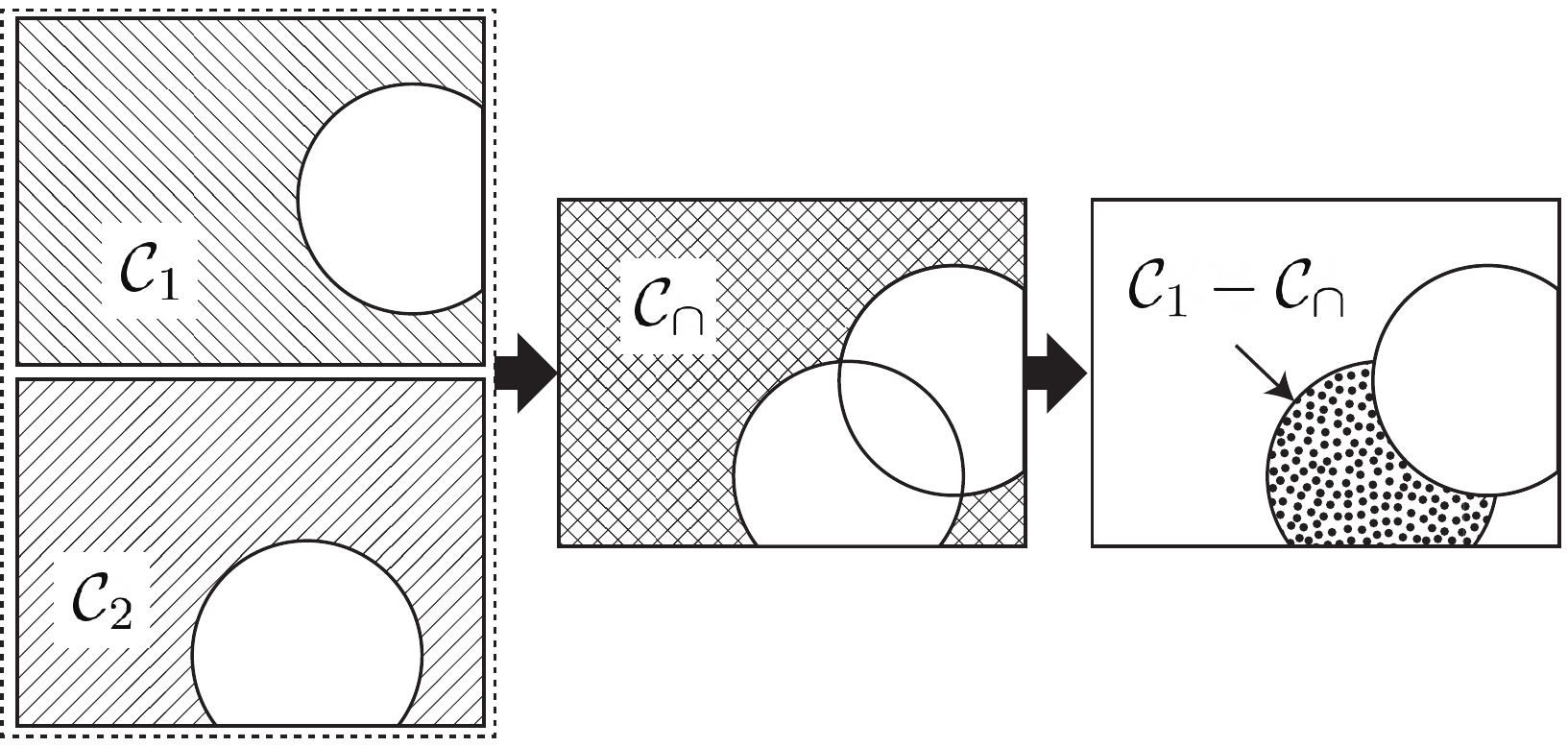}
\vspace{-6mm}
\caption{\textbf{Sets in our problem definition}: Given safety-critical constraints, the sets defined in this paper, $\mathcal{C}_1$, $\mathcal{C}_2$, $\mathcal{C}_{\cap}$, and $\mathcal{C}_1- \mathcal{C}_{\cap}$, are described visually.}
\label{Fig3}
\vspace{-0.5cm}
\end{figure}

To define our problem, we first define the following sets using $\mathcal{C}_1$ and $\mathcal{C}_2$:
\begin{equation}
    \begin{split}
        \mathcal{C}_{\cap} &= \left\{ \bm{x} \in \mathbb{R}^{n} : h_1(\bm{x}) \geq 0 \wedge h_2(\bm{x}) \geq 0 \right\},\\
        \mathcal{C}_{\cup} &= \left\{ \bm{x} \in \mathbb{R}^{n} : h_1(\bm{x}) \geq 0 \vee h_2(\bm{x}) \geq 0 \right\}
    \end{split}
\end{equation}
where $\wedge$ and $\vee$ denote logical operators AND and OR, respectively. It is true that $\mathcal{C}_{\cap} = \mathcal{C}_{1} \bigcap \mathcal{C}_2$ and $\mathcal{C}_{\cup} = \mathcal{C}_{1} \bigcup \mathcal{C}_2$. For proof of this, see Lemma 3.1 in \cite{wang2016multi}. One of the critical assumptions in our problem definition is as follows: 
\begin{assumption}
Consider two properly defined class $\mathcal{K}_{\infty}^{e}$ functions $\alpha_1$ and $\alpha_2$. We assume that there exists a state $\bm{x}$ along the trajectory such that $\bm{x} \in \mathcal{C}_{1} - \mathcal{C}_{\cap}$, which means $h_1(\bm{x})>0$ and $h_2(\bm{x})<0$, as shown in Fig. \ref{Fig3}.
\end{assumption}
\noindent
If the current state $\bm{x}$ satisfies Assumption 1, the potential for infeasibility arises due to a conflict between the constraints \eqref{cbf_1} and \eqref{cbf_2}, simultaneously.
\begin{proposition}
Under Assumption 1, if a state $\bm{x}$ is given, there exist $\alpha_1$ and $\alpha_2$ such that no control input from $\mathcal{K}_{\mathrm{cbf},1}(\bm{x})$ belongs to $\mathcal{K}_{\mathrm{cbf},2}(\bm{x})$.
\end{proposition}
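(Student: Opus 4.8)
The plan is to reduce the proposition to elementary geometry of two half-spaces in the input space. Writing $\dot h_i(\bm x,\bm u)=\mathcal{L}_f h_i(\bm x)+\mathcal{L}_g h_i(\bm x)\bm u$, each feasible set is $\mathcal{K}_{\mathrm{cbf},i}(\bm x)=\{\bm u\in\mathbb{R}^m:\mathcal{L}_g h_i(\bm x)\bm u\ge\beta_i\}$ with offset $\beta_i=-\alpha_i(h_i(\bm x))-\mathcal{L}_f h_i(\bm x)$. The only freedom is in the scalars $\alpha_1(h_1(\bm x))$ and $\alpha_2(h_2(\bm x))$; by Assumption~1, $h_1(\bm x)>0$ and $h_2(\bm x)<0$, and since every $\alpha\in\mathcal{K}^e_\infty$ is strictly increasing through the origin (e.g.\ a linear $\alpha(r)=kr$ with arbitrary $k>0$), the first scalar can be made any positive real and the second any negative real. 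Hence $\beta_1$ can be driven arbitrarily far toward $-\infty$ and $\beta_2$ arbitrarily far toward $+\infty$, and the question becomes whether such translations can make $\mathcal{K}_{\mathrm{cbf},1}(\bm x)$ and $\mathcal{K}_{\mathrm{cbf},2}(\bm x)$ disjoint.

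I would first handle the degenerate direction. If $\mathcal{L}_g h_2(\bm x)=\bm 0$, the set $\mathcal{K}_{\mathrm{cbf},2}(\bm x)$ collapses to the state-only condition $\mathcal{L}_f h_2(\bm x)\ge-\alpha_2(h_2(\bm x))$, which fails once $\alpha_2$ is steep enough that $-\alpha_2(h_2(\bm x))>\mathcal{L}_f h_2(\bm x)$; then $\mathcal{K}_{\mathrm{cbf},2}(\bm x)=\emptyset$ and the claim holds vacuously (symmetrically, if $\mathcal{L}_g h_1(\bm x)=\bm 0$ with $\mathcal{L}_f h_1(\bm x)<0$, one empties $\mathcal{K}_{\mathrm{cbf},1}(\bm x)$ instead). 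When both control gradients are nonzero, the configuration of interest is the ``opposing'' one, $\mathcal{L}_g h_1(\bm x)=-\mu\,\mathcal{L}_g h_2(\bm x)$ for some $\mu>0$ — precisely what two spatial collision-avoidance barriers produce under an integrator ROM, where moving away from obstacle~$2$ means moving toward obstacle~$1$. Then $\mathcal{K}_{\mathrm{cbf},1}(\bm x)$ and $\mathcal{K}_{\mathrm{cbf},2}(\bm x)$ are half-spaces with antiparallel normals, and a Farkas-lemma argument shows their intersection is empty exactly when a single affine inequality in $(\beta_1,\beta_2)$ holds; since $\beta_2$ can be taken arbitrarily large, this is met by choosing $\alpha_2$ sufficiently steep (with any admissible $\alpha_1$), which closes that case.

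The step I expect to be the genuine obstacle is the non-degenerate, non-opposing case: $\mathcal{L}_g h_1(\bm x)$ and $\mathcal{L}_g h_2(\bm x)$ nonzero and linearly independent. Two half-spaces in $\mathbb{R}^m$ whose normals are not antiparallel intersect under every translation, so to keep the statement true there I would either (i) invoke the admissibility restriction $\bm u\in\mathcal{U}$ with $\mathcal{U}$ bounded — pushing $\beta_2$ past $\sup_{\bm u\in\mathcal{U}}\mathcal{L}_g h_2(\bm x)\bm u$ forces $\mathcal{K}_{\mathrm{cbf},2}(\bm x)\cap\mathcal{U}=\emptyset$ — or (ii) make explicit a structural hypothesis on $h_1,h_2$ at $\bm x$ (vanishing or antiparallel control gradients) that the targeted robotics scenarios satisfy. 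Accordingly, my write-up would state the proposition with one such qualification, prove the $\mathcal{L}_g h_2(\bm x)=\bm 0$ and opposing-gradient cases as above, and remark that these are exactly the conflicting geometries the rest of the paper addresses; the delicate part is choosing a hypothesis weak enough to read as generic yet strong enough to let the construction go through.
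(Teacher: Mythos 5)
Your proposal is correct in substance but takes a genuinely different route from the paper, and in the process it exposes the real content of the statement more clearly than the paper's own argument does. The paper does not treat $h_1,h_2$ as arbitrary: its proof \emph{constructs} the conflict by positing $h_2(\bm{x}) = -\beta_1(\bm{x})h_1(\bm{x}) + \beta_2(\bm{x})$ with $\beta_1>0$, differentiates this relation, bounds $-\beta_1\dot h_1$ from above using membership in $\mathcal{K}_{\mathrm{cbf},1}(\bm{x})$, and then asks for an $\alpha_2$ steep enough (recall $h_2(\bm{x})<0$, so $-\alpha_2(h_2(\bm{x}))$ can be pushed to $+\infty$) that the resulting upper bound on $\dot h_2$ sits strictly below $-\alpha_2(h_2(\bm{x}))$ for every $\bm{u}\in\mathcal{K}_{\mathrm{cbf},1}(\bm{x})$. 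If you expand $\mathcal{L}_g h_2$ under that ansatz you find it equals $-(\mathcal{L}_g\beta_1\, h_1 - \mathcal{L}_g\beta_2) - \beta_1\mathcal{L}_g h_1$, and the paper's required inequality can hold over the unbounded set $\mathcal{K}_{\mathrm{cbf},1}(\bm{x})$ only when the $\bm{u}$-dependent terms line up so that $\mathcal{L}_g h_2$ is a negative multiple of $\mathcal{L}_g h_1$ (or vanishes). In other words, the paper's algebraic construction is exactly your antiparallel-gradient case in disguise, plus your degenerate $\mathcal{L}_g h_2 = \bm{0}$ case; your half-space picture and the translation-of-offsets argument reach the same conclusion by elementary geometry and make the mechanism visible.

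Your flagged obstruction is also legitimate and applies to the paper's proof, not just yours: when $\mathcal{L}_g h_1(\bm{x})$ and $\mathcal{L}_g h_2(\bm{x})$ are nonzero and not antiparallel, the two half-spaces in $\mathbb{R}^m$ intersect for every choice of offsets, so no $\alpha_1,\alpha_2$ can separate them; the proposition as literally quantified (arbitrary given $h_1,h_2$ satisfying Assumption 1) is false in that regime, and the paper's argument silently avoids it because its required inequality becomes unsatisfiable there (the left-hand side is affine in $\bm{u}$ and unbounded below on $\mathcal{K}_{\mathrm{cbf},1}(\bm{x})$). Your two proposed repairs --- restricting to a bounded admissible set $\mathcal{U}$, or stating the antiparallel/degenerate gradient hypothesis explicitly --- are both adequate, and the second is the one the paper implicitly adopts. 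The only thing I would push back on is presentational: since the proposition is meant as an existence claim motivating the relaxation (``a conflict can occur''), it suffices to exhibit one admissible geometry rather than to characterize all of them, so your write-up should make clear that the antiparallel case alone already proves the proposition in the sense the paper intends.
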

\begin{proof}
Let us consider $h_2(\bm{x}) = -\beta_{1}(\bm{x}) h_1(\bm{x}) + \beta_2(\bm{x})$ where $\beta_1(\bm{x})>0$ and $\beta_2(\bm{x}) < \beta_1(\bm{x}) h_1(\bm{x})$. In addition, it is noted $\beta_1$ and $\beta_2$ are the class $C^{1}$ functions. The time derivatives of $\beta_1(\bm{x})$ and $\beta_2(\bm{x})$ becomes as follows:
\begin{equation*}
   \frac{ \partial {\beta}_{1,2}}{\partial \bm{x}} \dot{\bm{x}} =   \frac{ \partial {\beta}_{1,2}}{\partial \bm{x}} \left( f(\bm{x}) + g(\bm{x}) \bm{u} \right) = \dot{\beta}_{1,2}(\bm{x}, \bm{u}).
\end{equation*}
Then, the time derivative of $h_2(\bm{x})$ becomes
\begin{equation*}
\begin{split}
 \dot{h}_{2}(\bm{x},\bm{u}) &= -\dot{\beta}_1(\bm{x}, \bm{u}) h_1(\bm{x}) - \beta_1(\bm{x}) \dot{h}_{1}(\bm{x}, \bm{u}) + \dot{\beta}_{2}(\bm{x}, \bm{u}) \\
 &\leq -\dot{\beta}_{1}(\bm{x}, \bm{u}) h_1(\bm{x}) + \beta_1(\bm{x}) \alpha_{1}(h_1(\bm{x})) + \dot{\beta}_{2}(\bm{x}, \bm{u}).
\end{split}
\end{equation*}
If the defined $\alpha_2$ satisfies the following inequality for all control inputs $\bm{u} \in \mathcal{K}_{\mathrm{cbf},1}$,
\begin{equation*}
    \dot{\beta}_{1}(\bm{x}, \bm{u}) h_1(\bm{x}) - \beta_1(\bm{x}) \alpha_{1}(h_1(\bm{x})) - \dot{\beta}_{2}(\bm{x}, \bm{u}) > \alpha_2(h_2(\bm{x}))
\end{equation*}
there is no $\bm{u}$ satisfying $\dot{h}_{2}(\bm{x}, \bm{u}) \geq -\alpha_2(h_2(\bm{x}))$ in $\mathcal{K}_{\mathrm{cbf},1}(\bm{x})$. Therefore, any control inputs from $\mathcal{K}_{\mathrm{cbf},1}(\bm{x})$ do not belong to $\mathcal{K}_{\mathrm{cbf},2}(\bm{x})$.
\end{proof}
As noted in Proposition 1, the CBF-QP (Two-CBFs) problem is infeasible, which means that $\bm{u}^{\bullet}$ does not exist. To address this issue, we need to relax the QP problem, In the relaxation process, we introduce a new assumption:
\begin{assumption}
Under Assumption 1, the first safety condition is more critical than the other, so it must be strictly satisfied while tracking the trajectory, $\bm{x}(t) \in \mathcal{C}_{1}$ for all time. 
\end{assumption}
\noindent
Assuming that the conditions specified in Assumptions 1 and 2 are met, we present methods to obtain control inputs that prioritize the satisfaction of safety-critical constraints.

\section{Relaxation of CBF-QP}
\label{section3}
In this section, we present our proposed relaxation approaches for a scenario with two CBFs, Relaxed-CBF-QP and Hierarchical-CBF-QP. Then, we examine the impact of the weighting factors of QP formulations on the robot's behavior. In addition, we discuss a potential approach for determining the weighting factor of safety relaxation. We then generalize the approach to accommodate arbitrary CBFs. 

\begin{figure}[t] 
\centering
\includegraphics[width=\linewidth]{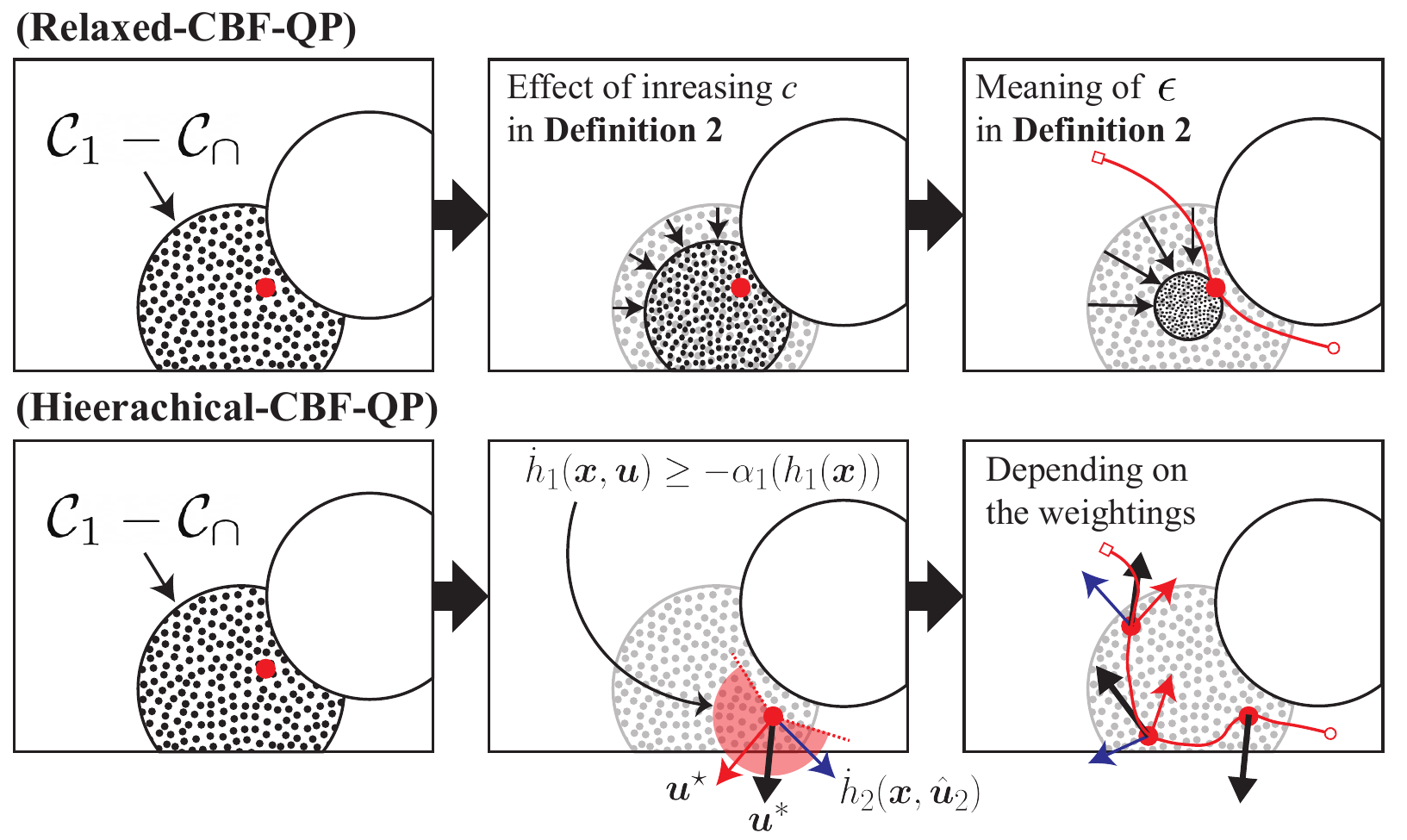}
\vspace{-6mm}
\caption{\textbf{Intuitions for the relaxation approaches}: Relaxed-CBF-QP (R-CBF-QP) aims to reduce the size of the safe set $\mathcal{C}_{1}- \mathcal{C}_{\cap}$ by adjusting the set $\mathcal{C}_2$. On the other hand, Hierarchical-CBF-QP (H-CBF-QP) combines the optimized control input $\bm{u}^{\star}$ with $\dot{h}_2(\bm{x}, \hat{\bm{u}}_2)$ to generate the appropriate control input.}
\label{Fig4}
\vspace{-0.5cm}
\end{figure}

\subsection{Two CBFs Case}
To ensure the feasibility of the CBF-QP problem, we modify the CBF constraint \eqref{cbf_2} under Assumptions 1 and 2. Specially, we define a minimum constant value $\epsilon$ for the secondary safety-critical constraint to ensure that  $\bm{x}\in \mathcal{C}_{\cap}$.
\begin{definition}
For a state $\bm{x}\in \mathcal{C}_1 - \mathcal{C}_{\cap}$, let $\mathbf{C}(\bm{x})$ be the set of constant values as follows:
\begin{equation}
    \mathbf{C}(\bm{x}) \coloneqq \{c \in \mathbb{R}_{\geq 0}: h_1(\bm{x}) \geq 0  \wedge h_2(\bm{x}) \geq -c \}.
\end{equation}
Then, the minimum value among the elements of $\mathbf{C}(\bm{x})$ is defined as $\epsilon \coloneqq \min \left( \mathbf{C}(\bm{x}) \right)$ as depicted in Fig. \ref{Fig4}.
\end{definition}
\noindent
In other words, if $h_2(\bm{x})$ is greater than or equal to $-\epsilon$, then the state $\bm{x}$ satisfies both safety conditions and is in the safe set $\mathcal{C}{\cap}$. Instead of using $h_2(\bm{x})$ in the CBF constraint, we introduce new barrier function with an offset as $h_2'(\bm{x}) = h_2(\bm{x}) + c$ where $\mathcal{C}_{2}' = \{\bm{x}\in \mathbb{R}^{n}: h_2'(\bm{x}) \geq 0 \}$. The secondary safety-critical constraints can then be expressed as following inequality constraints 
\begin{equation*}
\begin{split}
       \dot{h}_{2}'(\bm{x},\bm{u}) &= \mathcal{L}_f(h_2(\bm{x}) + c) +\mathcal{L}_g(h_2(\bm{x}) + c)\bm{u} \\
       &= \dot{h}_{2}(\bm{x}, \bm{u}) \geq - \alpha_{2}(h_2(\bm{x}) + c)
\end{split}       
\end{equation*}
where $\mathcal{L}_{f} c = 0$ and $\mathcal{L}_{g}c = 0$, respectively. We introduce an additional relaxation variable $\delta$ to make the CBF-QP problem feasible, which satisfies $\alpha_2( h_2(\bm{x}) + c) = \alpha_2(h_2(\bm{x})) + \delta$, which means 
\begin{equation*}
    c = \alpha_2^{-1}( \alpha_2(h_2(\bm{x})) + \delta ) - h_2(\bm{x}) \geq \epsilon.
\end{equation*}
If we already know $\epsilon$, then the relaxation variable is bounded as $\delta \geq \alpha_2(h_2(\bm{x}) + \epsilon) - \alpha_2(h_2(\bm{x}))$. Otherwise, we can simply set $\delta \geq 0$ in the optimization problem. Now, we formulate the first relaxed approach called Relaxed-CBF-QP as follows:
\begin{ruledtable}
\vspace{-2mm}
{\textbf{\normalsize Relaxed-CBF-QP (R-CBF-QP):}}
\par\vspace{-4mm}{\small 
\begin{subequations}
\begin{align}
    \bm{u} = \argmin_{\bm{u}\in \mathcal{U},\; \delta \in \mathbb{R}_{\geq0}} &\quad \mathcal{J}_{r} =  \lambda_{u}\| \bm{u}^{\star} - \bm{u} \|^2 + \lambda_{\delta}\delta^{2}, \nonumber \\
    \textrm{s.t.}&\quad \dot{h}_{1}(\bm{x}, \bm{u}) \geq - \alpha_{1}(h_1(\bm{x})), \label{cbf_1_new}\\
    &\quad \dot{h}_{2}(\bm{x}, \bm{u}) \geq - \alpha_{2} (h_{2}(\bm{x})) - \delta, \label{cbf_2_new}
\end{align}
\end{subequations}}
\vspace{-5mm}
\end{ruledtable}
\noindent
where $\lambda_{\delta} \geq 0$ and $\lambda_{u} \geq 0$ are the weighting factors. The solution to the above QP problem depends on the relative weighting factors in the cost function. 
\begin{remark} \label{remark01}
Let us consider $\lambda_{u}=0$ and $\lambda_{\delta}>0$. As per Definition 2 and the modified CBF constraint \eqref{cbf_2_new}, the solution to the R-CBF-QP problem satisfies \eqref{cbf_2_new} with the lower bound $\alpha_2(h_2(\bm{x}) + \epsilon)$. Therefore, we have $\delta^\star = \alpha_2(h_2(\bm{x}) + \epsilon) - \alpha_2(h_2(\bm{x}))$, which does not incorporate any feedback control effort.
\end{remark}

\begin{corollary}
When $\lambda_u>0$ and $\lambda_\delta=0$, the constraint \eqref{cbf_2_new} vanishes. Thus, the control input from the solution to R-CBF-QP is identical to one obtained by solving CBF-QP formulated in \eqref{opt_second} considering $h_1(\bm{x})$. 
\end{corollary}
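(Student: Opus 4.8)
The plan is to argue that setting $\lambda_\delta = 0$ decouples $\delta$ from the objective, that $\delta$ can then always be chosen large enough to render \eqref{cbf_2_new} inactive, and hence that the R-CBF-QP collapses onto the single-CBF program \eqref{opt_second} built from $h_1$.

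First I would observe that with $\lambda_\delta = 0$ the cost reduces to $\mathcal{J}_r = \lambda_u\|\bm{u}^\star - \bm{u}\|^2$, which does not depend on $\delta$ at all; the only remaining role of $\delta$ is to appear, subject merely to $\delta \geq 0$, in \eqref{cbf_2_new}. Second, I would show that \eqref{cbf_2_new} places no restriction on $\bm{u}$: fix any $\bm{u}\in\mathcal{U}$; then $\dot h_2(\bm{x},\bm{u}) = \mathcal{L}_f h_2(\bm{x}) + \mathcal{L}_g h_2(\bm{x})\bm{u}$ is a finite real number, whereas the right-hand side $-\alpha_2(h_2(\bm{x})) - \delta$ tends to $-\infty$ as $\delta \to +\infty$ ($h_2(\bm{x})$, hence $\alpha_2(h_2(\bm{x}))$, being fixed and finite for the given state). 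Therefore some $\delta \geq 0$ — indeed any sufficiently large one — satisfies $\dot h_2(\bm{x},\bm{u}) \geq -\alpha_2(h_2(\bm{x})) - \delta$. Consequently the projection of the R-CBF-QP feasible set onto the $\bm{u}$-coordinates is exactly $\{\bm{u}\in\mathcal{U}: \dot h_1(\bm{x},\bm{u}) \geq -\alpha_1(h_1(\bm{x}))\} = \mathcal{U}\cap\mathcal{K}_{\mathrm{cbf},1}(\bm{x})$, i.e., the constraint \eqref{cbf_2_new} ``vanishes.''

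Third, combining these two facts, minimizing $\mathcal{J}_r$ over the R-CBF-QP feasible set is equivalent to $\min_{\bm{u}\in\mathcal{U}} \lambda_u\|\bm{u}^\star - \bm{u}\|^2$ subject only to \eqref{cbf_1_new}. Since $\lambda_u > 0$, dividing the objective by $\lambda_u$ leaves the minimizer unchanged, and the resulting program is precisely \eqref{opt_second} with $h \leftarrow h_1$ and desired control value $\bm{u}^\star$. I would close by noting that this minimizer in $\bm{u}$ is unique — the objective is strictly convex in $\bm{u}$ and the feasible set is the intersection of the half-space $\mathcal{K}_{\mathrm{cbf},1}(\bm{x})$ with the convex set $\mathcal{U}$ — so ``identical'' is unambiguous, while $\delta$ itself is free (any feasible value is optimal).

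The step I expect to require the most care is the second one: making precise that \eqref{cbf_2_new} imposes no constraint on $\bm{u}$. One must state explicitly that $\dot h_2(\bm{x},\bm{u})$ is finite for each fixed admissible $\bm{u}$ (so the argument does not need $\mathcal{U}$ to be bounded) and that $\delta$ is genuinely unconstrained from above because its cost weight is zero; these are exactly the ingredients that let the slack variable absorb any violation of the secondary safety condition.
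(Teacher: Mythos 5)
Your argument is correct and is exactly the reasoning the corollary relies on: with $\lambda_\delta=0$ the slack $\delta$ is costless and unbounded above, so for every fixed $\bm{u}$ a sufficiently large $\delta\geq 0$ satisfies \eqref{cbf_2_new}, the feasible set projects onto $\mathcal{U}\cap\mathcal{K}_{\mathrm{cbf},1}(\bm{x})$, and minimizing $\lambda_u\|\bm{u}^\star-\bm{u}\|^2$ with $\lambda_u>0$ is equivalent to \eqref{opt_second} with $h_1$. The paper states this corollary without an explicit proof, and your write-up (including the remark that finiteness of $\dot h_2(\bm{x},\bm{u})$ for each fixed $\bm{u}$ is what lets the slack absorb any violation) supplies precisely the intended justification.
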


As mentioned in the problem definition, our objective is to partially satisfy the secondary safety-critical constraints while strictly enforcing the primary constraint. So, we obtain $\hat{u}_{k}$ for each safety-critical constraint by solving \eqref{opt_second}. Then, we employ the solution $\hat{\bm{u}}_{k}$ to compute $\dot{h}_{k}(\bm{x}, \hat{\bm{u}}_{k})$. Next, we formulate a QP problem with an equality constraint that enforces the hierarchy between the safety-critical constraints under Assumption 2. 
\begin{ruledtable}
\vspace{-2mm}
{\textbf{\normalsize Hierarchical-CBF-QP (H-CBF-QP):}}
\par\vspace{-4mm}{\small 
\begin{subequations}
\begin{align}
    \bm{u}^{\ast} =\argmin_{\bm{u} \in \mathcal{U},\; \delta \in \mathbb{R}} &\quad  \mathcal{J}_{h} = \lambda_{u}\| \bm{u}^{\star} - \bm{u} \|^2 + \lambda_{\delta}\delta^{2}, \nonumber \\
    \textrm{s.t.}&\quad \dot{h}_1(\bm{x}, \bm{u}) \geq - \alpha_{1}(h_1(\bm{x})),\label{ieq_cbf_1} \\
    &\quad \dot{h}_{2}(\bm{x}, \bm{u}) + \delta =  \dot{h}_{2}(\bm{x},\hat{\bm{u}}_2), \label{eq_cbf_error}
\end{align}
\end{subequations}}
\vspace{-5mm}
\end{ruledtable}
\noindent
Because of Assumptions 1 and 2, $\delta$ cannot be zero in the H-CBF-QP. So, we enforce the constraint \eqref{eq_cbf_error} to minimize the difference between $\dot{h}_{2}(\bm{x}, \bm{u})$ and $\dot{h}_{2}(\bm{x}, \hat{\bm{u}}_2)$ as shown in Fig. \ref{Fig4}. 
\begin{corollary}
    Like Corollary 1, the constraint \eqref{eq_cbf_error} disappears when $\lambda_{u} >0$ and $\lambda_{\delta}=0$.   
\end{corollary}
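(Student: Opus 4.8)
The plan is to mirror the reasoning behind Corollary 1, exploiting the fact that in the H-CBF-QP the relaxation variable $\delta$ ranges over all of $\mathbb{R}$ (not $\mathbb{R}_{\geq 0}$, as in the R-CBF-QP) and, once $\lambda_{\delta}=0$, no longer appears in the objective. First I would note that with $\lambda_{\delta}=0$ the cost reduces to $\mathcal{J}_{h}=\lambda_{u}\|\bm{u}^{\star}-\bm{u}\|^{2}$, which depends on $\bm{u}$ alone, and that the positive scalar $\lambda_{u}$ does not affect the minimizer.

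Next I would argue that the equality constraint \eqref{eq_cbf_error} places no restriction on $\bm{u}$. Since $\dot{h}_{2}(\bm{x},\bm{u})=\mathcal{L}_{f}h_{2}(\bm{x})+\mathcal{L}_{g}h_{2}(\bm{x})\bm{u}$ is a finite real number for every admissible $\bm{u}$, and $\hat{\bm{u}}_{2}$ (the solution of \eqref{opt_second} for $h_{2}$) is a fixed quantity, the scalar $\dot{h}_{2}(\bm{x},\hat{\bm{u}}_{2})-\dot{h}_{2}(\bm{x},\bm{u})$ is well defined; because $\delta$ is unconstrained in sign and magnitude, choosing $\delta=\dot{h}_{2}(\bm{x},\hat{\bm{u}}_{2})-\dot{h}_{2}(\bm{x},\bm{u})$ satisfies \eqref{eq_cbf_error} for any $\bm{u}\in\mathcal{U}$. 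Hence the projection of the feasible set of the H-CBF-QP onto the $\bm{u}$-coordinates equals $\{\bm{u}\in\mathcal{U}:\dot{h}_{1}(\bm{x},\bm{u})\geq-\alpha_{1}(h_{1}(\bm{x}))\}$, i.e. exactly the feasible set of the CBF-QP \eqref{opt_second} built from $h_{1}$ alone.

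I would then conclude that, under $\lambda_{u}>0$ and $\lambda_{\delta}=0$, the H-CBF-QP is equivalent to $\argmin_{\bm{u}\in\mathcal{U}}\lambda_{u}\|\bm{u}^{\star}-\bm{u}\|^{2}$ subject only to \eqref{ieq_cbf_1}; in this sense the constraint \eqref{eq_cbf_error} \emph{disappears}, and $\bm{u}^{\ast}$ coincides with the solution of the single-CBF QP for $h_{1}$. For completeness I would also remark that this reduced problem remains feasible: $h_{1}$ being a CBF guarantees $\mathcal{K}_{\mathrm{cbf},1}(\bm{x})\neq\emptyset$.

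The argument is essentially immediate; the only point requiring care is the contrast with the R-CBF-QP setting, where $\delta\in\mathbb{R}_{\geq0}$ — there one must check that the sign restriction on $\delta$ does not reintroduce an implicit constraint on $\bm{u}$ — whereas here $\delta\in\mathbb{R}$ makes the decoupling automatic. The ``main obstacle,'' such as it is, amounts to verifying that no hidden coupling between $\bm{u}$ and $\delta$ survives once $\delta$ leaves the objective, which is the observation made above.
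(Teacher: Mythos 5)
Your argument is correct and is precisely the reasoning the paper leaves implicit (the paper states Corollary 2 without proof, by analogy with Corollary 1): once $\lambda_{\delta}=0$, the free variable $\delta\in\mathbb{R}$ can absorb the equality \eqref{eq_cbf_error} for any $\bm{u}$, so the problem collapses to the single-CBF QP \eqref{opt_second} for $h_{1}$. Your added remark contrasting the unsigned $\delta$ here with the nonnegativity restriction in R-CBF-QP is a worthwhile clarification that the paper does not spell out.
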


\begin{proposition}
When $\lambda_u=0$ and $\lambda_{\delta} >0$, H-CBF-QP becomes \eqref{opt_second} with a cost $\mathcal{J}_{j} = \| \bm{u}^{\star} - \bm{u} \|_{\mathbf{M}}^2 + \bm{p}^{\top} \bm{u}$.
\end{proposition}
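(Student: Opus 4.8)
The plan is to set $\lambda_u=0$, use the equality constraint \eqref{eq_cbf_error} to eliminate the (unconstrained) slack $\delta$, and then complete the square about $\bm{u}^{\star}$ so as to recognize the remaining objective as $\mathcal{J}_j$ and the remaining feasible set as that of \eqref{opt_second} applied to $h_1$. The whole argument is essentially bookkeeping with the affine structure of $\dot h_2$; there is no deep obstacle, only a couple of points to state carefully.

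First I would observe that with $\lambda_u=0$ the cost of H-CBF-QP collapses to $\lambda_\delta\delta^2$, and that $\delta\in\mathbb{R}$ enters only through \eqref{eq_cbf_error}. Hence for every $\bm{u}\in\mathcal{U}$ satisfying \eqref{ieq_cbf_1} there is exactly one admissible $\delta$, namely $\delta=\dot h_2(\bm{x},\hat{\bm{u}}_2)-\dot h_2(\bm{x},\bm{u})$; substituting this into the objective gives an equivalent problem in $\bm{u}$ alone whose only remaining constraint is the CBF constraint \eqref{ieq_cbf_1} for $h_1$ — i.e. exactly the feasible set of \eqref{opt_second} with $h=h_1$. (This is where the present case differs from Corollary~2: the equality constraint does not disappear, it is used to express $\delta$ as a function of $\bm u$.)

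Next I would exploit that $\dot h_2(\bm{x},\cdot)$ is affine in the input, $\dot h_2(\bm{x},\bm{u})=\mathcal{L}_f h_2(\bm{x})+\mathcal{L}_g h_2(\bm{x})\bm{u}$, so the substituted slack is $\delta=\mathcal{L}_g h_2(\bm{x})(\hat{\bm{u}}_2-\bm{u})$. Writing $\bm{\ell}:=\mathcal{L}_g h_2(\bm{x})^{\top}\in\mathbb{R}^{m}$ we get $\lambda_\delta\delta^2=\lambda_\delta\big(\bm{\ell}^{\top}(\hat{\bm{u}}_2-\bm{u})\big)^2$, and completing the square about $\bm{u}^{\star}$ yields
\[
\lambda_\delta\delta^2=(\bm{u}^{\star}-\bm{u})^{\top}\mathbf{M}\,(\bm{u}^{\star}-\bm{u})+\bm{p}^{\top}\bm{u}+c_0,
\]
with $\mathbf{M}:=\lambda_\delta\,\bm{\ell}\bm{\ell}^{\top}\succeq 0$, $\bm{p}:=-2\lambda_\delta\,\bm{\ell}\bm{\ell}^{\top}(\hat{\bm{u}}_2-\bm{u}^{\star})$, and $c_0:=\lambda_\delta\big((\bm{\ell}^{\top}\hat{\bm{u}}_2)^2-(\bm{\ell}^{\top}\bm{u}^{\star})^2\big)$ a scalar independent of $\bm{u}$. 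Since $c_0$ does not affect the minimizer, the reduced problem is $\argmin_{\bm{u}\in\mathcal{U}}\mathcal{J}_j$ subject to \eqref{ieq_cbf_1}, with $\mathcal{J}_j=\|\bm{u}^{\star}-\bm{u}\|_{\mathbf{M}}^2+\bm{p}^{\top}\bm{u}$, which is exactly \eqref{opt_second} for $h_1$ with cost $\mathcal{J}_j$, as claimed.

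The only point requiring care — rather than a genuine difficulty — is that $\mathbf{M}$ has rank one and is merely positive semidefinite, so $\|\cdot\|_{\mathbf{M}}$ is a seminorm; this is harmless, since the linear term $\bm{p}^{\top}\bm{u}$ is precisely what restores the correct dependence of $\delta^2$ on $\bm{u}$ while keeping the quadratic centered at the original desired input $\bm{u}^{\star}$. I would also keep the row-vector/transpose convention for the Lie derivatives consistent (with $\mathcal{L}_g h_2(\bm{x})\in\mathbb{R}^{1\times m}$ all of the above is dimensionally consistent), and note that the equivalence uses only that $\delta$ is unconstrained in $\mathbb{R}$, so the elimination step introduces no loss of feasibility.
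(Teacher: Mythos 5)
Your proposal is correct and follows essentially the same route as the paper's proof: eliminate $\delta$ via the equality constraint \eqref{eq_cbf_error}, use the affine structure of $\dot h_2$ in $\bm{u}$, and expand about $\bm{u}^{\star}$ with $\mathbf{M}$ built from $\mathcal{L}_g h_2(\bm{x})$ and $\bm{p}$ proportional to $\hat{\bm{u}}_2-\bm{u}^{\star}$. Your one substantive refinement is the observation that $\mathbf{M}=\lambda_\delta\,\bm{\ell}\bm{\ell}^{\top}$ is rank one and only positive semidefinite (so $\|\cdot\|_{\mathbf{M}}$ is a seminorm), which corrects the paper's claim that $\mathbf{M}$ is positive definite; this does not affect the conclusion but is the accurate statement when $m>1$.
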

\begin{proof}
    By substituting $\delta$, the cost function of H-CBF-QP becomes $ \mathcal{J}_{h} = \| \dot{h}_{2}(\bm{x}, \hat{\bm{u}}_{2}) - \dot{h}_{2}(\bm{x}, \bm{u}) \|^{2}$. Let us consider $\hat{\bm{u}}_2 = \bm{u}^{\star} + \Delta \bm{u}$ from \eqref{opt_second}. Then, the cost function is specified as follows:
    \begin{equation*}
    \begin{split}
          \mathcal{J}_{h} =& (\bm{u}^{\star} + \Delta \bm{u} - \bm{u})^{\top}\mathbf{M}(\bm{u}^{\star} + \Delta \bm{u} - \bm{u}) \\
          =& \| \bm{u}^{\star} - \bm{u}\|_{\mathbf{M}}^{2}  +\bm{p}^{\top} \bm{u}  + \mathrm{const}    
    \end{split}
    \end{equation*}
    where $\bm{p}^{\top} = - 2\Delta \bm{u}^{\top} \mathbf{M}$ and $\mathbf{M} = \left( \mathcal{L}_{g}h_2(\bm{x})\right)^{\top}\mathcal{L}_{g}h_2(\bm{x})$, which is positive definite.
\end{proof}
\noindent
Different from R-CBF-QP, H-CBF-QP implicitly considers the feedback control input although $\lambda_{u} = 0$ due to Proposition 2. 

\subsection{Design of Weighing Factors}
The ideal approach would be to formulate and solve a large-scale nonlinear optimization problem that determines the weighting factors, but this is challenging since the entire control framework comprises multiple optimization problems in a cascaded structure. Therefore, in this section, we propose a straightforward method to find suitable parameters for the weightings. To simplify the problem, we fix the weighting factor for the feedback control input error as a constant $\lambda_{u} = 1$, and express the other weighting factor as a function of $h_2$. Specifically, we define $\lambda_{\delta}(h_2(\bm{x}))$ as:
\begin{equation}
    \lambda_{\delta}(h_2(\bm{x})) = \left\{ \begin{array}{ll} \gamma_{0} & \textrm{if} \quad h_2(\bm{x}) > 0,\\
    \gamma_{0}(1 + \Delta \gamma |h_2(\bm{x})|) & \textrm{else} \end{array} \right.
\end{equation}
where $\gamma_0>0$ is a baseline weighting factor for relaxing $h_2(\bm{x})$, and $\Delta \gamma>0$ is a gradient factor for preventing the violation of $h_2(\bm{x})\geq 0$. This weighting factor reduces $\delta$ more as $h_2(\bm{x})$ decreases below $0$. As discussed in Corollary 2 and Proposition 2, the proper selection of the parameters ($\gamma_0, \; \Delta \gamma$) is crucial to balance the performance of the feedback controller and the satisfaction of the lower-prioritized safety condition. 

One straightforward approach for searching the optimal parameter is to use a sampling-based technique. First, we define a cost function that depends on the task trajectory, which is discretized with a time interval $\Delta t$. The cost function is given as follows:
\begin{equation} \label{cost}
    \begin{split}
    r =&  \sum_{i=1}^{N-1} \| \bm{e}_x(i \Delta t)\|_{w_i} + \|\bm{e}_x(T) \|_{w_f} + w_{\delta} \sum_{i=1}^{N} \delta(i \Delta t)
    \end{split}
\end{equation}
where $w_i$ and $w_f$ are the weightings for the tracking trajectory and achieving the final goal, respectively, and $w_f \gg w_i >0$. $w_{\delta} >0$ is a discount factor that penalizes safety relaxation of the secondary condition. Second, we randomly draw samples from the Normal distribution $(\gamma_0,\; \Delta \gamma) \sim \mathcal{N}(\bm{\mu},; \bm{\Sigma})$, where $\bm{\mu}$ and $\bm{\Sigma}$ are the mean vector and covariance matrix, respectively. We iterate the simulations with the random samples to find the optimal pair $(\gamma_0,\;\Delta \gamma)$ that minimizes the cost function \eqref{cost}. When designing the weightings by evaluating the cost, we eliminate the offset as $ r_{o}(\Delta \gamma,\gamma_0) = r(\Delta \gamma, \gamma_0) - \min(r)$ where $r(\Delta \gamma, \gamma_0)$ denotes the cost with $\Delta \gamma$ and $\gamma_0$.

\subsection{Generalization}
We generalize the proposed approach for handling general cases to incorporate $m$ CBF constraints. To achieve this, we rewrite Assumptions 1 and 2 as follows for the extensive case:
\begin{assumption}
    Given $m$ CBFs, $h_1,\; \cdots, \;h_m$, we assume that a state $\bm{x}$ belongs to $\mathcal{C}_1 - \mathcal{C}_{\cap}$ where $\mathcal{C}_{\cap} = \{\bm{x} \in \mathbb{R}^{n}: h_1(\bm{x}) \geq 0 \wedge \cdots \wedge h_{m}(\bm{x}) \geq 0\}$. The importance of the CBFs is determined in lexicographic order.  
\end{assumption}
Under the above assumption, it is possible to extend our approach in both implicity and explicit ways. First, we formulate a single QP problem that considers the feedback control input and all safety-critical constraints implicitly. This is the Implicit-H-CBF-QP (IH-CBF-QP) and is given as follows:
\begin{ruledtable}
\vspace{-2mm}
{\textbf{\normalsize Implicit-H-CBF-QP (IH-CBF-QP):}}
\par\vspace{-4mm}{\small 
\begin{subequations}
\begin{align*}
    \min_{\bm{u}\in \mathcal{U},\; \delta_{2}, \cdots, \delta_m \in \mathbb{R}} &\quad \mathcal{J}_{imp} = \lambda_{u}\| \bm{u}^{\star} - \bm{u} \|^2 + \sum_{k=2}^{m }\lambda_{\delta_{k}}\delta_{k}^{2}, \nonumber \\
    \textrm{s.t.}&\quad \dot{h}_1(\bm{x}, \bm{u}) \geq - \alpha_1(h_1(\bm{x})),\\
    &\quad \dot{h}_{k}(\bm{x}, \bm{u}) +  \delta_k = \dot{h}_{i}(\bm{x}, \hat{\bm{u}}_{k}), \quad \forall i \in \{2, \cdots, m\}
\end{align*}
\end{subequations}}
\vspace{-5mm}
\end{ruledtable}
\noindent
We obtain a feasible control input by solving $m$ QP problems, which are \eqref{opt_second} for $k=2$ to $k=m$ and the above IH-CBF-QP. However, the solution to the problem is too sensitive and ambiguous to the weighting parameters in the cost function. Since all relaxation variables are coupled, it is hard to impose the hierarchy among them in this implicit way, even if heuristics are used. Improper weightings may result in reducing the lower-prioritized constraint and increasing the error of the higher-prioritized one. 

 \begin{figure*}[t] 
\centering
\includegraphics[width=.95\linewidth]{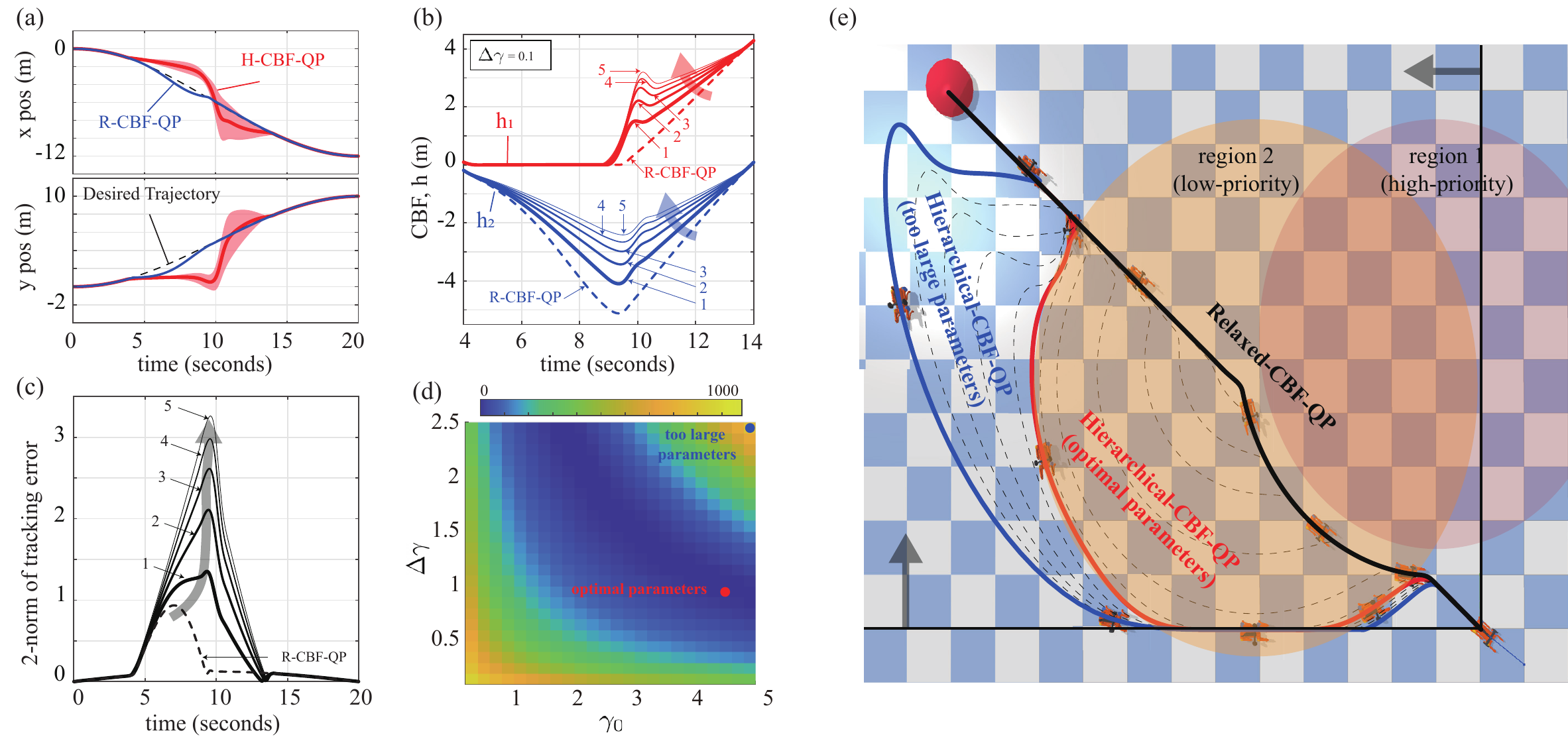}
\vspace{-4mm}
\caption{\textbf{Simulation results}: (a) $x$ and $y$ positions of the robot, (b) CBF values with respect to different values of $\gamma_0$, (c) tracking errors with respect to different values of $\gamma_0$, (d) color map of the cost values in terms of the sampled parameters, (e) robotic simulation results demonstrated using a quadruped robot (A1) in 2D operational space. In figure (a), we present the entire results of H-CBF-QP given the sampled $\gamma_0$ and $\Delta \gamma$. Figures (b) and (c) include the results with the fixed $\Delta \gamma$ for showing a clear tendency of $\gamma_0\; (1,\; 2,\; 3,\; 4,\; 5)$.}
\label{Fig5}
\vspace{-0.55cm}
\end{figure*}

To address this issue, we introduce a sequential optimization method, Explicit-H-CBF-QP (EH-CBF-QP). This formulation is based on a recursive extension of the H-CBF-QP and defined as follows for $k\geq 3$:
\begin{ruledtable}
\vspace{-2mm}
{\textbf{\normalsize Explicit-H-CBF-QP (EH-CBF-QP):}}
\par\vspace{-4mm}{\small 
\begin{subequations}
\begin{align*}
    \min_{\bm{u}\in \mathcal{U},\; \delta_{k}\in \mathbb{R}} &\quad \mathcal{J}_{exp} = \lambda_{u}\| \bm{u}^{\star} - \bm{u} \|^2 + \lambda_{\delta_{k}}\delta_{k}^{2}, \nonumber \\
    \textrm{s.t.}&\quad \dot{h}_1(\bm{x}, \bm{u}) \geq - \alpha_1(h_1(\bm{x})),\\
    &\quad \dot{h}_{i}(\bm{x}, \bm{u}) +  \delta_i^{\ast} = \dot{h}_{i}(\bm{x}, \hat{\bm{u}}_{i}), \quad \forall i \in \{2, \cdots, k-1\}\\
    &\quad \dot{h}_{k}(\bm{x}, \bm{u}) +\delta_k= \dot{h}_{k}(\bm{x}, \hat{\bm{u}}_{k}) , 
\end{align*}
\end{subequations}}
\vspace{-5mm}
\end{ruledtable}
\noindent
Here, $\delta_{i}^{\ast}$ denotes the optimal relaxation variable obtained from the optimization of the $i$-th H-CBF-QP. By recursively solving the EH-CBF-QP problems from $k=2$ to $k=m$, we obtain the optimal control input and relaxation variables for all safety-critical constraints. 
\begin{corollary}
    In the EH-CBF-QP formulation, the hierarchy is strictly maintained, i.e., it is impossible to decrease $\delta_{k}^{2}$ without increasing $\delta_{i}^{2}$ with higher priority ($i<k$). 
\end{corollary}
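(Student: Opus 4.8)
The plan is to analyze the EH-CBF-QP recursion step by step and show that at the $k$-th stage the higher-priority relaxation variables $\delta_2^\ast, \dots, \delta_{k-1}^\ast$ are \emph{frozen} as equality constraints, so that any admissible perturbation of the decision variable $\bm{u}$ that changes $\delta_k$ cannot change any $\delta_i$ with $i < k$. First I would set up the notation: at stage $k$ the feasible set is
\[
\mathcal{F}_k(\bm{x}) = \Big\{ (\bm{u},\delta_k) : \dot{h}_1(\bm{x},\bm{u}) \geq -\alpha_1(h_1(\bm{x})),\ \dot{h}_i(\bm{x},\bm{u}) = \dot{h}_i(\bm{x},\hat{\bm{u}}_i) - \delta_i^\ast\ \forall i \in \{2,\dots,k-1\},\ \dot{h}_k(\bm{x},\bm{u}) + \delta_k = \dot{h}_k(\bm{x},\hat{\bm{u}}_k) \Big\},
\]
and observe that for any $(\bm{u},\delta_k) \in \mathcal{F}_k(\bm{x})$ the induced values of the higher-priority residuals are, by the equality constraints, identically $\delta_i^\ast$ and therefore $\delta_i^2 = (\delta_i^\ast)^2$ is a constant on $\mathcal{F}_k(\bm{x})$.

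Next I would make precise the claim "impossible to decrease $\delta_k^2$ without increasing some $\delta_i^2$ with $i<k$." The point is that along the recursion the $\delta_i^\ast$ for $i < k$ have already been fixed by the earlier optimizations; they are not re-optimized at stage $k$. So within stage $k$ the quantities $\delta_i^2$ ($i<k$) are unaffected by any choice made at stage $k$ — they cannot increase and they cannot decrease. Hence the only way the hierarchy could be "violated" is across stages, and there I would argue using the nested structure of the feasible sets: the equality constraints imposed at stage $k$ include all the equality constraints of stage $k-1$ (with $\delta_i$ pinned to the already-computed $\delta_i^\ast$), so $\mathcal{F}_k(\bm{x})$ projects onto a subset of the $(\bm{u})$-feasible region of stage $k-1$; consequently $\delta_i^\ast$ is precisely the stage-$i$ optimum and is never revisited. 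This gives the lexicographic (strict-priority) property: decreasing $\delta_k^2$ below its stage-$k$ optimum is infeasible given the frozen higher-priority residuals, and the only way to gain freedom in $\delta_k$ would be to relax one of the equality constraints $\dot{h}_i(\bm{x},\bm{u}) = \dot{h}_i(\bm{x},\hat{\bm{u}}_i) - \delta_i^\ast$, which by definition increases the corresponding $|\delta_i|$ away from its optimal value — i.e. increases $\delta_i^2$ for some $i < k$.

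I would finish by assembling these observations: since $\delta_k^\ast$ minimizes $\lambda_{\delta_k}\delta_k^2$ (with $\lambda_{\delta_k}>0$) over $\mathcal{F}_k(\bm{x})$, any feasible point with smaller $\delta_k^2$ must lie outside $\mathcal{F}_k(\bm{x})$, hence must violate at least one equality constraint $\dot{h}_i(\bm{x},\bm{u}) + \delta_i^\ast = \dot{h}_i(\bm{x},\hat{\bm{u}}_i)$ for some $i \in \{2,\dots,k-1\}$; restoring feasibility of that constraint with a relaxed value $\delta_i \neq \delta_i^\ast$ strictly enlarges $\delta_i^2$ because $\delta_i^\ast$ was itself the minimizer of $\delta_i^2$ over the stage-$i$ feasible set (an argument applied inductively down the chain). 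This establishes the strict maintenance of the hierarchy.

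The main obstacle I anticipate is making the cross-stage argument fully rigorous rather than merely intuitive: one must carefully formalize what "trade-off" between $\delta_k^2$ and $\delta_i^2$ means when the two variables live in different optimization problems, and show that the feasible set $\mathcal{F}_k(\bm{x})$ is nonempty at every stage (so the recursion is well-posed) — nonemptiness follows from the fact that constraint \eqref{ieq_cbf_1} on $h_1$ alone is always feasible (Assumption~2 guarantees $\bm{x} \in \mathcal{C}_1$, and $h_1$ is a CBF) together with the observation that the equality constraints on $\dot{h}_i$ only pin down $m-1$ linear functionals of $\bm{u}$, which is compatible with the half-space \eqref{ieq_cbf_1} generically. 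A secondary subtlety is handling the degenerate case $\mathcal{L}_g h_k(\bm{x}) = 0$, where $\delta_k$ is forced and the statement holds vacuously; I would dispatch that case separately at the start.
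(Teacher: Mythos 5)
The paper states this corollary without any proof, so there is no argument of the authors' to compare yours against; your within-stage observation --- that at stage $k$ the equality constraints $\dot{h}_i(\bm{x},\bm{u}) + \delta_i^{\ast} = \dot{h}_i(\bm{x},\hat{\bm{u}}_i)$ freeze the higher-priority residuals, so that decreasing $\delta_k^2$ below the stage-$k$ optimum forces $\bm{u}$ out of $\mathcal{F}_k(\bm{x})$ and hence forces some $\delta_i$, $i<k$, away from $\delta_i^{\ast}$ --- is almost certainly the mechanism the authors have in mind, and that part of your proposal is sound.

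There is, however, a genuine gap in your cross-stage step. You conclude that restoring feasibility with some $\delta_i \neq \delta_i^{\ast}$ \emph{strictly enlarges} $\delta_i^2$ ``because $\delta_i^{\ast}$ was itself the minimizer of $\delta_i^2$ over the stage-$i$ feasible set.'' It was not: the stage-$i$ cost is $\mathcal{J}_{exp} = \lambda_u\|\bm{u}^{\star}-\bm{u}\|^2 + \lambda_{\delta_i}\delta_i^2$, and the paper explicitly works with $\lambda_u = 1 > 0$ (Section III-B), so $\delta_i^{\ast}$ is the $\delta_i$-component of a minimizer of a \emph{mixed} objective, not the lexicographic minimum of $\delta_i^2$ subject to the higher-priority constraints. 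Consequently there may exist feasible $\bm{u}'$ with $\delta_i(\bm{u}')^2 < (\delta_i^{\ast})^2$ for all $i<k$ and simultaneously $\delta_k(\bm{u}')^2 < (\delta_k^{\ast})^2$, at the price of a larger control error $\|\bm{u}^{\star}-\bm{u}'\|^2$ --- which contradicts the literal statement you are trying to prove. A second, smaller hole is the sign ambiguity: a point with $\delta_i = -\delta_i^{\ast}$ violates the stage-$k$ equality constraint (and so may free up $\delta_k$) while leaving $\delta_i^2$ exactly unchanged, again defeating the claim that $\delta_i^2$ must \emph{increase}. To close the argument you must either (i) restrict to the regime $\lambda_u = 0$, where each stage genuinely performs $\lexmin$ of $\delta_2^2,\dots,\delta_m^2$ and the standard lexicographic-optimality induction goes through, or (ii) weaken the conclusion from ``increasing $\delta_i^2$'' to ``changing $\delta_i$ from its committed value $\delta_i^{\ast}$,'' which is what your frozen-constraint argument actually establishes. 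Your anticipated obstacles (nonemptiness of $\mathcal{F}_k(\bm{x})$, the degenerate case $\mathcal{L}_g h_k(\bm{x}) = 0$) are real but secondary to this issue.
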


\section{Implementation}
\label{section4}
In this section, we present simulation and experiment results that demonstrate the effectiveness of our approach using a quadruped robot system (A1). 

\subsection{Simulation Scenario}
The simulations were performed on a laptop equipped with a 3.4 GHz Intel Core i7 processor using MATLAB, Pybullet, and OSQP software tools. In addition, we consider a double integrator (ROM) and PD controller to validate our approach in the high-level architecture, then implement a full-body task space controller to realize the control input of ROM in the simulation. The desired orientation of the robot's base is computed based on the ratio of velocity components in the $x$ and $y$ directions. The state of ROM is defined as $\bm{x}=[\bm{p}^{\top},\; \dot{\bm{p}}^{\top}]^{\top}$ where $ \bm{p} \in \mathbb{R}^{2}$ represents the position in a $2$-dimensional space, and the control input $\bm{u}\in \mathcal{U}$ is bounded by an input set $\mathcal{U}=\{ \bm{z} \in \mathbb{R}^{2}: |z_i|\leq 5, \; i =1,2  \}$.

In this paper, we aim to solve a tracking problem where the task trajectory, $\bm{p}^{d}(t)$, is given for $[0,\; 20]$ seconds with $\bm{p}^{d}(0) = [0,\;0]^{\top}$ and $\bm{p}^{d}(20) =[-12,\; 10]^{\top}$. To ensure safety, we enforce inequality constraints such that $h_1(\bm{x}) = p_x + 0.2 \geq 0$, $h_2(\bm{x}) = p_y +0.2 \geq 0$, and
\begin{equation*}
    h_{k} = \| \bm{p} - \bm{p}_{k} \| - r_{k} \geq 0
\end{equation*}
where $\bm{p}_{3} = [-1,\; 5.5]^{\top}$, $\bm{p}_{4} = [-1,\; 4.5]^{\top}$, $r_3 = 4$, and $r_4 = 5$. In this simulation, we assume that the region represented by $h_3(\bm{x})\geq 0$ (region 1) is the more dangerous region than the region represented by $h_4(\bm{x}) \geq 0 $ (region 2), and hence, a hierarchy among these safety constraints is imposed in lexicographic order. For the class $\mathcal{K}$ functions, we utilize the following function:
\begin{equation*}
    \alpha(h) = \left\{\begin{array}{cc} \varphi_{1} \log (\varphi_{2} h + \varphi_3) - \varphi_1 \log \varphi_3  & h\geq0, \\
    - \varphi_{1}\log(-\varphi_2h + \varphi_3) + \varphi_1 \log \varphi_3 & h<0 \end{array} \right.
\end{equation*}
where $\varphi_1, \varphi_2, \varphi_3 >0$. In this section, we set $\varphi_1 = 1$, $\varphi_2 = 0.5$, and $\varphi_3 = 0.1$. We draw random samples for $\delta_2$ with $\bm{\mu}=[3,\;1.5]$ and $\bm{\Sigma} = \textrm{diag}(1,\;1)$. Next, we consider simulation results with the samples in $\gamma_0 \in [0,\;5]$ and $\Delta \gamma \in [0,\;2.5]$ for simplicity. For the cost \eqref{cost}, the weightings are set as $w_i = 1$, $w_f=1000$, $w_\delta = 0.5$, respectively.



\begin{figure*}[t] 
\centering
\includegraphics[width=\linewidth]{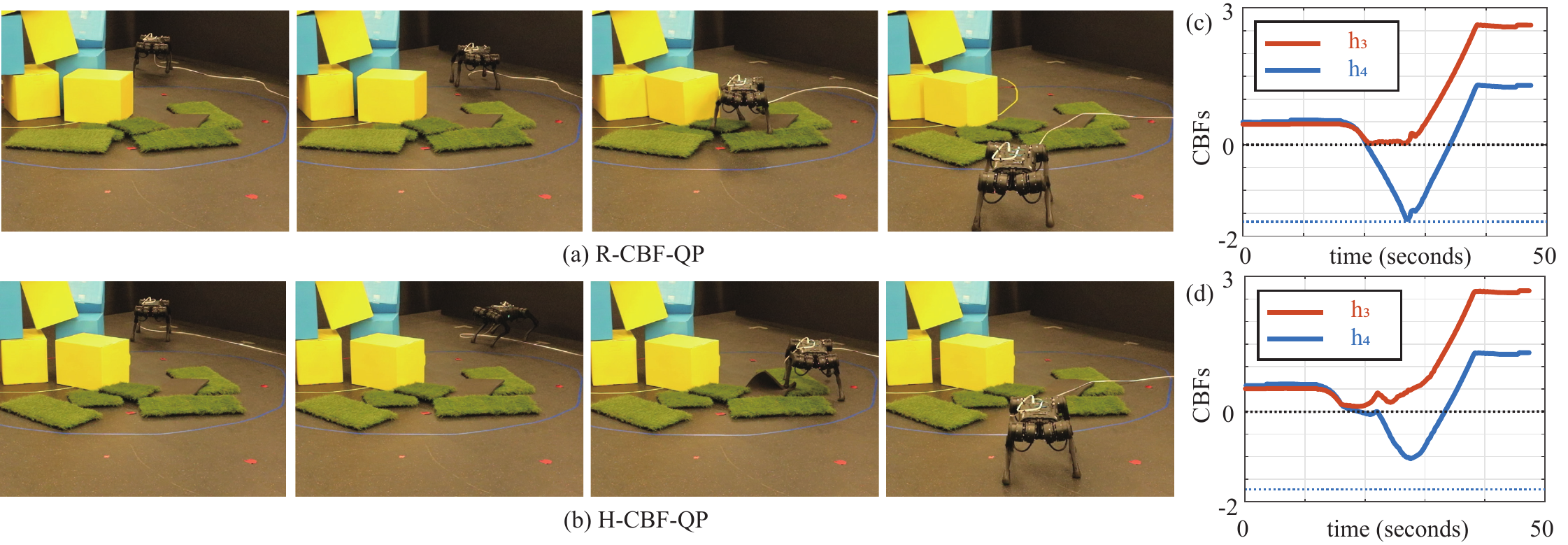}
\vspace{-7mm}
\caption{\textbf{Snapshots of experimental validation}: (a) and (b) show the snapshots of experiments implemented by using R-CBF-QP and H-CBF-QP, respectively. (c) and (d) represent CBF value variations in R-CBF-QP and H-CBF-QP, respectively. The robot successfully reaches the goal destination using both approaches. However, there is a collision between the robot and one of the yellow boxes, which is located near the center of Region 2, in the third snapshot of (a). In contrast, H-CBF-QP effectively avoids the collision. }
\label{Fig6}
\vspace{-0.5cm}
\end{figure*}

\subsection{Simulation Results}
We simulated the R-CBF-QP and H-CBF-QP controllers with different parameters for the weighting factor, and the results are depicted in Fig. \ref{Fig5}. Fig. \ref{Fig5}(a) shows the desired and actual base positions of the robot with different controllers. While R-CBF-QP prioritizes safety-critical constraints for region 1, the actual path deviates slightly from the desired trajectory. In contrast, the robot controlled by H-CBF-QP deviates more far from the desired trajectory than the one controlled by R-CBF-QP because the robot tries to become safer by escaping region 2. Fig \ref{Fig5}(b) shows that the barrier function value $h_4(\bm{x})$ becomes closer to zero while maintaining $h_3(\bm{x})$ above zero, indicating that our approach is capable of reducing the violations of secondary safety conditions while strictly satisfying those with the highest priority. However, as shown in Fig. \ref{Fig5}(c), the tracking error of the feedback controller increases as we attempt to make the robot safer in terms of the secondary safety condition. To balance the feedback control performance and safety, we deploy the proposed a sampling-based technique to find proper parameters and analyze them in a cost map of Fig. \ref{Fig5}(d).

Fig. \ref{Fig5}(e) shows the simulation results implemented by using Pybullet. The black path indicating the result driven by R-CBF-QP goes through the center of region 2. However, the robot controlled by H-CBF-QP tries to avoid region 2, even if it incurs some tracking errors. Interestingly, if we set the weighting parameters too large, the robot moves far from the desired trajectory, and it becomes challenging to return to the original path. Therefore, we need to find the proper balance between performance and safety parameters.

\subsection{Experimental Validation}
We demonstrate the effectiveness and efficiency of our proposed approaches using the real quadruped robot (A1). We maintain consistency between the simulation scenario and the experimental setup. To do experiments in a realistic environment, we construct an experimental structure using boxes (region 1) and artificial turf plates (region 2) shown in Fig. \ref{Fig6}. At the center of the region 2, we put a box which is lighter than ones in the region 1. Given the assumption that the boxes in the region 1 are too heavy for the robot to move, we prioritize avoiding region 1 over region 2. Due to space limitations, we scale down the problem setup and set the centers of the regions at $\bm{p}_{3} = [ -1.7,\; 1.5 ]^{\top}$ and $\bm{p}_{4} = [-0.35,\; 1.8]^{\top}$, with radii of $r_3=1.7$ and $r_4 = 1.34$, respectively. The goal destination is set at $[-4,\; 3.34]^{\top}$, and we generate a continuous trajectory toward the destination during a $20$-second interval. The detailed specifications for the CBFs are the same as ones in the simulations. By demonstrating simulations with the sampled parameters, then, we obtain the pair for H-CBF-QP as $(5,\;1.3)$. 

In these experiments, both relaxation approaches, R-CBF-QP and H-CBF-QP, are able to generate feasible control inputs for the robot. Despite colliding with the box located at the center of region 2, R-CBF-QP is able to steer the robot towards the goal. However, if the box becomes much heavier than our setup, the robot may not be able to reach the goal destination. In contrast, H-CBF-QP generates a safer behavior and enables the robot to avoid the yellow box at the center of region 2, as shown in Fig. \ref{Fig7}. Fig. \ref{Fig6}(c) and (d) confirm that H-CBF-QP results in less violations of the safety condition associated with $h_4$ compared to R-CBF-QP.  

\begin{figure}[t] 
\centering
\includegraphics[width=\linewidth]{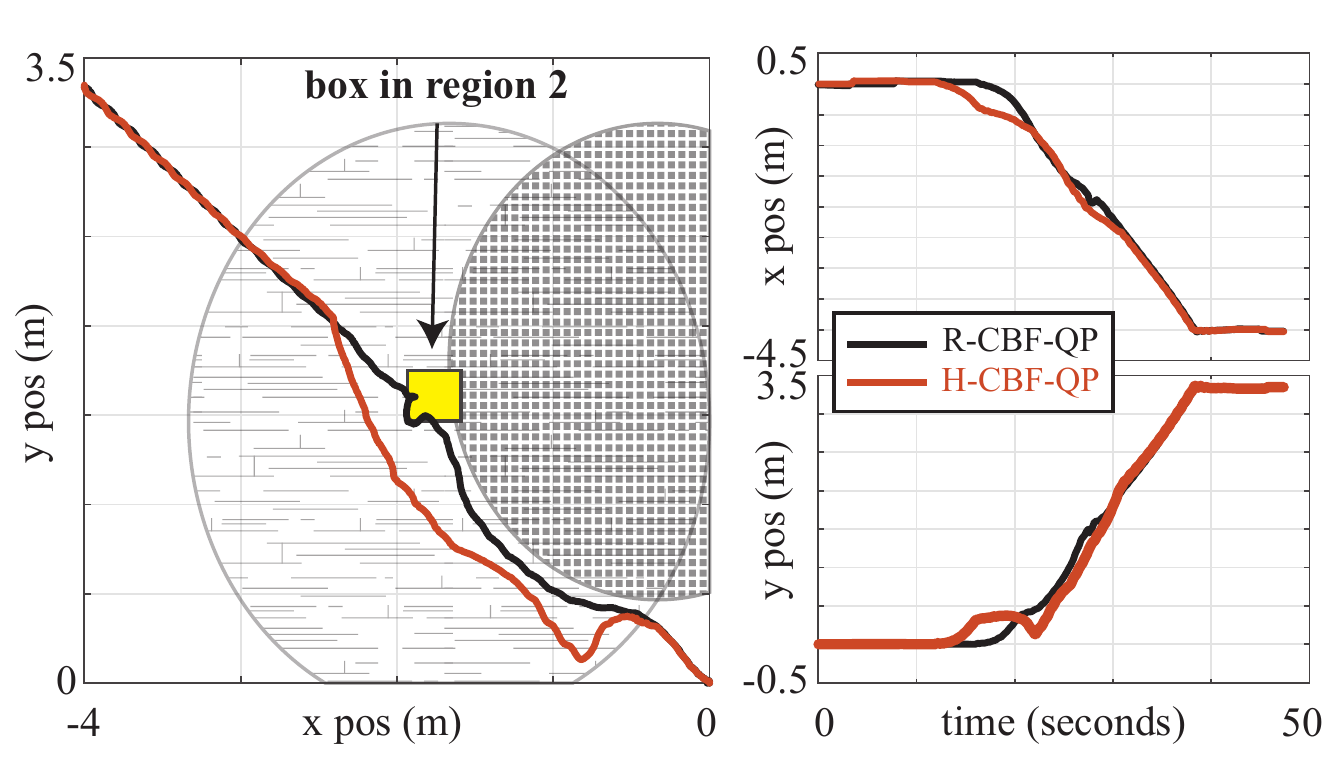}
\vspace{-7mm}
\caption{\textbf{Experimental results}: The movements of the robot in the operational space (left) and the position over the defined time horizon (right). There exists a box at the center of Region 2 depicted by a yellow square. }
\label{Fig7}
\vspace{-5mm}
\end{figure}

\section{Conclusion}
This paper presents a novel and intuitive approach to address contradictory safety conditions in robotic systems. By imposing a hierarchy among these conditions, we formulate feasible QP problems that consider the hierarchical safety conditions while generating dynamic locomotion. R-CBF-QP generates a feasible control input, taking into account only the highest priority safety constraint. On the other hand, the proposed H-CBF-QP is capable of strictly holding the safety condition with the highest priority, while handling others with lower priority. We also leverage a sampling-based method to determine the weighting factors in the H-CBF-QP formulation. The effectiveness of our approach is demonstrated through simulations in achieving safer locomotion of the quadruped robot (A1).

In the future, we aim to expand our proposed approach to handle more intricate physical interactions between robots and their environments. Our focus is on incorporating physical reaction forces when walking on contact-rich terrain or interacting with other agents in obstacle-rich environments. Furthermore, we are exploring the theoretical aspect of Input-to-State safety verification of our method and assessing the impact of uncertainties arising from estimations.

\bibliographystyle{IEEEtran}
\bibliography{l_css}

\end{document}